\documentclass{article}

\usepackage[final,main]{neurips_2026}

\usepackage[utf8]{inputenc} 
\usepackage[T1]{fontenc}    
\usepackage{hyperref}       
\usepackage{url}            
\usepackage{booktabs}       
\usepackage{amsfonts}       
\usepackage{nicefrac}       
\usepackage{microtype}      
\usepackage{xcolor}         
\usepackage{times}

\usepackage{graphicx}
\usepackage{color}
\usepackage{rotating}
\usepackage{tabularx}
\usepackage{pdflscape}
\usepackage{amsmath,amsthm,amssymb}
\usepackage{algorithm,algorithmic}
\usepackage{bbm,dsfont}
\usepackage{subcaption}
\usepackage{wrapfig}
\usepackage{cancel}
\usepackage{enumerate,cases}
\usepackage{thmtools,thm-restate}
\usepackage{mathtools}

\usepackage{multirow}
\usepackage{makecell}
\usepackage{setspace}
\usepackage{pifont}
\usepackage{array}
\usepackage{enumitem}
\usepackage{lineno}     
\usepackage{tcolorbox}
\tcbuselibrary{skins}
\usepackage{placeins} 

\allowdisplaybreaks

\usepackage{hyperref}		 
\hypersetup{
}
\usepackage[capitalize]{cleveref}
\crefformat{figure}{Fig.~#2#1#3}       
\crefformat{table}{Tab.~#2#1#3}        
\crefformat{equation}{Eq.~(#2#1#3)}    
\crefformat{section}{Sec.~#2#1#3}      
\crefformat{appendix}{App.~#2#1#3}     
\crefformat{algorithm}{Alg.~#2#1#3}    
\crefformat{definition}{Def.~#2#1#3}       

\usepackage{silence}
\newcommand{\algo}{{\textsc{CoFlow}}}

\newcommand{\para}[1]{\textbf{#1}~}

\renewcommand{\phi}{\varphi}
\renewcommand{\epsilon}{\varepsilon}

\newcommand{\el}{\end{flushleft}}
\newcommand{\bl}{\begin{flushleft}}

\newcommand{\squishlisttwo}{
 \begin{list}{$\bullet$}
  { \setlength{\itemsep}{1pt}
     \setlength{\parsep}{0pt}
    \setlength{\topsep}{0pt}
    \setlength{\partopsep}{0pt}
    \setlength{\leftmargin}{1em}
    \setlength{\labelwidth}{1.5em}
    \setlength{\labelsep}{0.5em} } 
}
\newcommand{\squishend}{\end{list}}

\theoremstyle{plain}

\title{
    Contextual Flow Matching: Adaptive Step Selection in Flow Models for Efficient Visual Generation
}

\author{
  	Divya Jyoti Bajpai \\
  	Department of IEOR \\
  	IIT Bombay, Mumbai,	India \\
  	\texttt{divyajyoti.bajpai@iitb.ac.in }
    \And
	Arun Verma \\
 	SMART Centre \\
    Singapore \\
\texttt{arun.verma@smart.mit.edu}
  	\And
  	Manjesh K. Hanawal \\
  	Department of IEOR \\
  	IIT Bombay, Mumbai,	India \\
  	\texttt{mhanawal@iitb.ac.in}
 }

\begin{document}

\maketitle

    \begin{abstract}

Flow Matching enables high-quality visual generation via continuous-time dynamics, but inference remains costly due to multiple sequential function evaluations. Existing acceleration methods reduce the number of function evaluations but often introduce additional training overhead, degrade quality, or fail to account for input-dependent variability.
We propose \algo{}, an inference-time method that adaptively selects the step counts each generation based on the prompt features. Our context-aware \algo{} is trained online with an unsupervised reward that balances inference efficiency and generation fidelity.
Our method is plug-and-play, requiring no retraining of the underlying generative model. It generalizes to image and video generation, achieving over $2.5\times$ speedup while preserving perceptual and semantic quality. We further provide a theoretical analysis establishing an $O(1/K)$ forward-Euler discretization error bound under standard regularity conditions.

    \end{abstract}



    \section{Introduction}
    \label{sec:introduction}

\emph{Flow Matching (FM)} \citep{lipman2022flow} has recently emerged as a powerful paradigm for visual generation \citep{bond2021deep}. 
In this framework, a time-dependent vector field governs the evolution of samples, and integrating this dynamics produces high-quality outputs along smooth trajectories. 
Compared to diffusion-based approaches \citep{croitoru2023diffusion}, FM can achieve \emph{comparable or superior perceptual fidelity} with fewer function evaluations (steps), making it an attractive alternative for both image and video generation.
Despite these advantages, \emph{inference efficiency remains a central challenge} \citep{kornilov2024optimal, yan2024perflow, davtyan2026faster}. 
Generating a single sample requires \emph{sequential numerical integration} of the learned dynamics, which involves multiple forward passes of the model over time. 
This inherent serial processing becomes increasingly expensive as model capacity scales and tasks demand higher spatial resolutions or longer temporal horizons, resulting in substantial inference latency in practical deployments.

A range of methods have been proposed to reduce inference latency in visual generations. These methods include \emph{distillation} \citep{kornilov2024optimal,yan2024perflow,luhman2021knowledge}, \emph{step truncation} \citep{lu2022dpm, song2020denoising}, and \emph{consistency-based training} \citep{yang2024consistency, zhang2025inverse, dao2025self}. 
While these approaches effectively reduce the number of function evaluations, they introduce non-trivial trade-offs, including additional training costs and reliance on teacher models or large-scale datasets. 
Such training overhead can offset the computational benefits gained at inference time.
Several training-free methods have also been proposed that rely on heuristic step-skipping \citep{liu2025timestep, bajpai2026flowcast, bajpai2026fastflow} based on intermediate signals. 
However, these training-free methods require accurate, expensive estimates of step importance, and any estimation errors can accumulate over time, irreversibly degrading generation quality.


The difficulty of a generative task and associated computational requirements can vary substantially. For example, a simple prompt ``A photo of a cat" requires less computation than a prompt like ``A photo of a cat traveling to the moon in a UFO" (see Fig. \ref{fig:motivation_example}) due to the need for fine-grained and prolonged trajectory refinement of the generative process. The complexity of prompts can then influence the number of steps. AdaDiff \cite{zhang2025adadiff} adapts to such input-dependent complexity by using a pre-trained policy that decides the number of steps required to complete the generative task. However, pre-training such a policy is expensive, and the trained policy can be sensitive to distribution shifts at inference time. Our goal in this work is to develop an input-adaptive online strategy that eliminates the need for pre-training.

\begin{figure}[!t]
    \centering
    \includegraphics[width = \textwidth]{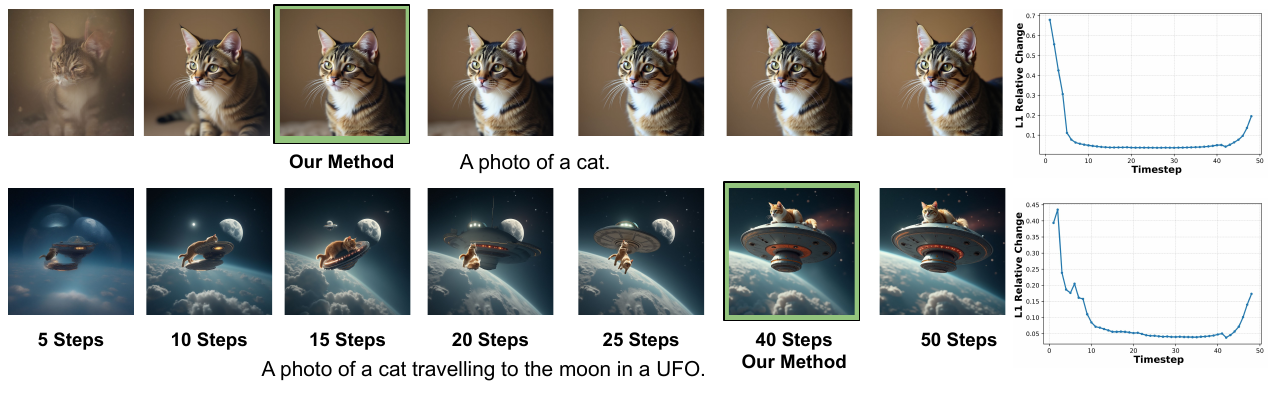}
    \caption{\textbf{Generation across integration step budgets.} Different inputs can exhibit different behavior as the integration budget changes. The rightmost plots illustrate relative velocity variation along the corresponding trajectories.}
    \label{fig:motivation_example}
    \vspace{-3mm}
\end{figure}

We introduce \algo{}: \underline{Co}ntextual \underline{Flow} Matching, an inference-time input-adaptive step selection method for Flow Models, thereby reducing inference cost.
\algo{} uses the \emph{ contextual bandit} \citep{zhou2020neural} framework to learn the optimal number of steps for each generation task in an online fashion. 
It treats prompt-specific features, like semantic embeddings, complexity cues, and prior observations from similar inputs as context, and uses a neural network to predict the number of steps required. Once the number of steps is decided, it executes the corresponding full generation trajectory. The neural network is trained on the fly by optimizing a reward that balances computational cost and trajectory refinement. 
The reward is motivated by the observation that complex prompts induce larger variations in the velocity field across timesteps (see Fig \ref{fig:motivation_example} and \ref{fig:reward_justification}).

\algo{} is fully task-agnostic, adapts to multiple existing solvers (see Appendix \ref{sec:solver_families}), and applies seamlessly to image generation, image editing, and video generation while maintaining high-fidelity outputs. We further provide theoretical grounding by deriving an upper bound on the discretization error induced by suboptimal step selection. Together, these components establish a principled adaptive inference framework that reduces unnecessary computation while preserving generation quality.

We summarize the key contributions of this paper as follows:
\vspace{-2mm}
\begin{itemize}
	\setlength\itemsep{0.0em}

    \item We propose a novel adaptive inference paradigm that predicts an input-dependent step budget using a neural contextual bandits algorithm, conditioned solely on prompt-specific features.
    
    
    \item We design an unsupervised reward mechanism based on trajectory dynamics, leveraging variations in the learned velocity field to balance computational cost and generation fidelity.
    
    \item We provide a theoretical analysis establishing an $O(1/K)$ discretization-error bound, highlighting the role of input-dependent flow dynamics in adaptive inference.

    \item Experimental results across image and video benchmarks show that our method achieves inference acceleration ($>2.5\times$) while maintaining quality and outperforming baselines.
\end{itemize}


    \section{Related Work}
    \label{sec:related_work}

Flow Matching (FM) \citep{lipman2022flow, dao2023flow, labs2025flux, deng2024fireflow} has emerged as a strong alternative to diffusion models \citep{croitoru2023diffusion, xing2024survey, yang2023diffusion, zhu2023conditional}.
FM achieves high-quality generation enabling applications such as image inversion \citep{deng2024fireflow}, editing \citep{wang2024taming}, and video synthesis \citep{kong2024hunyuanvideo, wan2025wan}. Recent advances further improves scalability and performance 
through multimodal extensions \citep{he2025flowtok,deng2025emerging}, large-scale flow models such as FLUX \citep{labs2025flux}, and hierarchical approaches such as pyramidal flow matching \citep{jin2024pyramidal} for video generation.

Despite these advances, iterative integration remains a fundamental bottleneck, limiting real-time and interactive deployment. 
Existing acceleration methods can be broadly categorized into retraining-based and inference-time approaches \citep{shaul2023bespoke}. 
Retraining-based methods, including distillation approaches such as InstaFlow \citep{liu2023instaflow}, LeDiFlow \citep{zwick2025lediflow}, and Diff2Flow \citep{schusterbauer2025diff2flow}, compress multi-step generation into one or a few steps by transferring knowledge from pretrained models. 
Other methods, such as PeRFlow \citep{yan2024perflow}, instead modify the training objective to produce straighter transport trajectories. 
Although effective, these approaches require additional training, large-scale data, or specialized objectives, resulting in substantial computational overhead.
Recent works further combine adversarial and consistency objectives to improve high-quality few-step generation \citep{yang2024consistency, zhang2025inverse, haber2025iterative}. 

Inference-time methods reduce computation by modifying the sampling procedure. Techniques such as TeaCache \citep{liu2025timestep} and FastFlow \citep{bajpai2026fastflow} skip redundant evaluations using trajectory-based signals, while FlowCast \citep{bajpai2026flowcast} adapts speculative decoding to flow matching models. While effective, these approaches typically rely on heuristic signals or alter trajectories designed for fixed discretization schedules, which can introduce inconsistencies and degrade generation quality.

Our setup is closely related to AdaDiff \citep{zhang2025adadiff}, but the solution approach differs in several fundamental aspects. The policy training in AdaDiff requires generating multiple images at various step budgets and assess their quality in each round of training, which require offline-data and introduces overheads. In contrast, our method is fully online and leverages intrinsic model dynamics (trajectory-level signals) as a proxy for generation complexity, thereby avoiding any additional generations and external evaluation.  
This yields a plug-and-play framework that preserves the consistency of the underlying discretization schdules of flow models resulting in efficient and high-quality visual outputs.


    \section{Methodology}
    \label{sec:methodology}

We begin with a brief review of Flow Matching (FM) and describe how our approach builds on it to enable adaptive inference.

\subsection{Flow Matching Overview}
Let $\pi_0$ and $\pi_1$ denote probability distributions over $\mathbb{R}^d$, corresponding to a simple source distribution and a complex target data distribution, respectively. Flow Matching formulates generation as learning a continuous transformation that maps samples from $\pi_0$ to $\pi_1$ via a time-dependent dynamical system. Specifically, a velocity field $v:\mathbb{R}^d \times [0,1] \rightarrow \mathbb{R}^d$ defines the evolution of a sample $x_t$ through the ordinary differential equation (ODE)
\begin{equation}
    \frac{dx_t}{dt} = v(x_t, t), \quad x_0 \sim \pi_0, \quad t \in [0,1].
\end{equation}
The velocity field, typically parameterized by a neural network, is trained such that integrating this dynamics transports the initial distribution $\pi_0$ to $\pi_1$ at $t=1$. This induces a flow map that traces a continuous trajectory for each sample, ultimately producing data-consistent outputs.

\textbf{Inference.} Since the learned dynamics generally do not admit closed-form solutions, generating requires numerical integration of the ODE. In practice, simple solvers such as the forward Euler method are commonly used due to their efficiency. The time interval $[0,1]$ is discretized into $K$ steps $\{t_0=0, t_1, \ldots, t_K=1\}$, and starting from $x_0 \sim \pi_0$, the sample is iteratively updated as
\begin{equation}
    x_{t_{k+1}} = x_{t_k} + \Delta t_k \cdot v(x_{t_k}, t_k), \quad \Delta t_k = t_{k+1} - t_k.
\end{equation}
This procedure approximates the continuous trajectory using a sequence of discrete updates, where the number of steps $K$ directly controls the trade-off between computational cost and generation quality. A finer discretization of the interval leads to lower Euler discretization error, denoted as $E(K)$ and vice versa. For a given $K$, the discretization points are fixed beforehand by the schedule of the solver.

A key observation motivating our approach is that different conditional inputs can induce trajectories with different levels of variation and, consequently, different sensitivities to temporal discretization. Thus, a fixed step budget may be unnecessarily conservative for some inputs while insufficient for others, motivating an adaptive mechanism that selects the step budget conditioned on the input.

\subsection{Algorithm Design}
To accelerate generation, our goal is to determine an input-dependent integration step budget during inference. Crucially, this decision should account for both the global data distribution and input-specific context, as samples from the same distribution can exhibit different trajectory dynamics. We cast adaptive step-budget selection as an online sequential decision-making problem using a Neural Contextual Bandit (NCB).

\begin{figure}[!ht]
    \centering
    \includegraphics[scale=0.619]{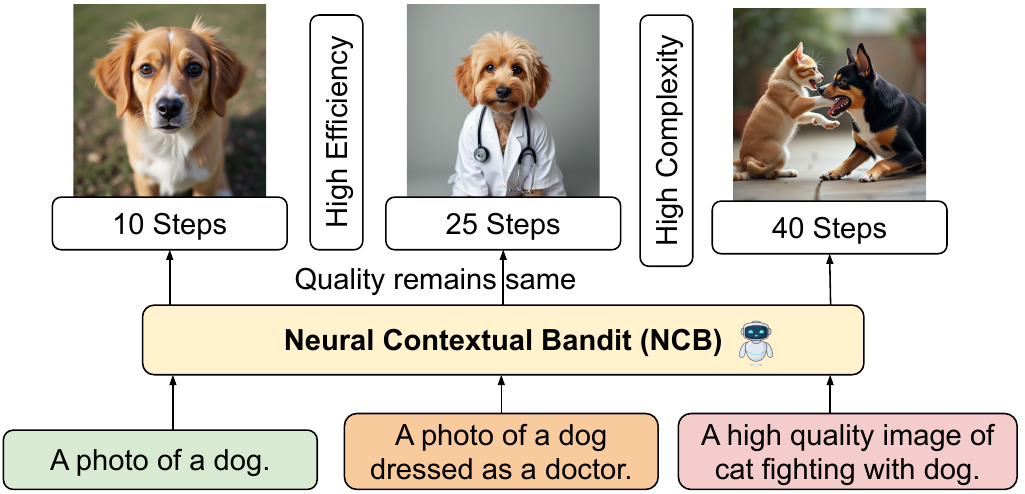}
    \caption{\textbf{Overview of our method.} \algo{} uses prompt context to adaptively select the integration step budget, allocating computation according to the learned surrogate reward while retaining the original flow trajectory and solver schedule.}
    \label{fig:placeholder}
\end{figure}

In a contextual bandit framework, an agent selects actions conditioned on input context to maximize cumulative reward while balancing exploration and exploitation. Let $\mathcal{A}$ denote the set of step budgets, where each action $\alpha \in \mathcal{A}$ corresponds to the number of discretization steps in $[0,1]$. For a given prompt $x$, let $\phi(x)$ denote the context features (e.g., semantic embeddings of prompt and complexity cues) enabling input-adaptive selection. A neural network models a policy $\pi(\alpha \mid \phi(x))$, mapping features to action preferences, capturing complex input–decision relationships. At each prompt $x$, the policy outputs the number of discretization steps $\alpha$ based on $\phi(x)$, and the full trajectory with $\alpha$ steps is executed. The policy is updated online using a reward below. 

\para{Reward design.}
Let $\{x_{t_k}^{(\alpha)}\}_{k=0}^{\alpha}$ denote the discretized trajectory obtained using step budget $\alpha\in\mathcal{A}$ for an input $x$, and let $v(x_{t_k}^{(\alpha)},t_k)$ denote the corresponding model velocity. We quantify its trajectory-wise variation using the average $\ell_1$ relative velocity change
\begin{equation}
\hat{v}(x,\alpha)=\frac{1}{\alpha-1}\sum_{k=1}^{\alpha-1}
\frac{\left\|v(x_{t_{k+1}}^{(\alpha)},t_{k+1})-v(x_{t_k}^{(\alpha)},t_k)\right\|_1}
{\max\left(\left\|v(x_{t_k}^{(\alpha)},t_k)\right\|_1,\varepsilon\right)},
\label{eq:velocity_change}
\end{equation}
where $\varepsilon>0$ ensures numerical stability. We use $\hat{v}(x,\alpha)$ as an intrinsic, unsupervised surrogate for sensitivity to temporal discretization: larger velocity variations suggest dynamics that may be more sensitive to coarse discretization. Importantly, $\hat{v}(x,\alpha)$ is a surrogate rather than a direct measure of discretization error or generation quality (see Appendix~\ref{sec:L1_as_proxy}).

For a given input $x$ and action $\alpha\in\mathcal{A}$, we define the reward as
\begin{equation}
r(x,\alpha)=\max\left(0,\,(1-\hat{v}(x,\alpha))-\mu\alpha\right),
\label{eq:reward}
\end{equation}
where $\mu>0$ controls the trade-off between trajectory variation and computational cost. The term $1-\hat{v}(x,\alpha)$ favors step budgets exhibiting smaller relative velocity variation, while $\mu\alpha$ penalizes additional computation. Since $\hat{v}(x,\alpha)$ is evaluated along the trajectory induced by $\alpha$, the reward provides an intrinsic surrogate objective for step selection without requiring an external quality signal.

For a given prompt $x$, we define the reward-optimal step budget as
\begin{equation}
\alpha^\star(x)\in
\arg\max_{\alpha\in\mathcal{A}}
\mathbb{E}\!\left[r(x,\alpha)\mid x\right].
\label{eq:optimal_budget}
\end{equation}
Here, ``reward-optimal'' refers specifically to optimality under the surrogate objective in Eq.~\ref{eq:reward}, rather than direct optimality with respect to perceptual or semantic generation quality. Consequently, prompts exhibiting larger trajectory variation can receive larger step budgets when the corresponding reduction in variation outweighs the additional computational cost. We further examine the relationship between this surrogate signal, prompt complexity, and adaptive step selection in Appendix~\ref{sec:L1_as_proxy}.

The goal is to learn a policy $\pi^*=\arg\max_{\pi}\mathbb{E}\!\left[\sum_{t=1}^{T}r(x_t,\alpha_t)\right]$, where the expectation accounts for stochasticity in the generated trajectories. This formulation captures the exploration--exploitation trade-off while enabling input-adaptive decisions. The learned policy uses the contextual representation and observed rewards to allocate computation according to the surrogate trade-off between trajectory variation and computational cost.

\textbf{Context features:} We define context features as a compact representation of the input prompt, capturing high-level semantic (embeddings), compositional (entity count), and structural (mean word length) information that guides adaptive step allocation. Intuitively, these properties may influence the variation of the conditional flow dynamics and therefore provide useful context for step selection. Appendix~\ref{sec:context_features} provides further details on the feature design.

\para{Policy optimization.}
We use a neural network  
$f_\theta(\phi(x), \alpha)$ to predict the reward for each pair of input context $\phi(x)$ 
and action $\alpha \in \mathcal{A}$ (step budget) denoted as
$
    f_\theta : (\phi(x), \alpha) \mapsto \mathbb{E}[r(x,\alpha)].
$
For the $t$-th prompt, we obtain the context $\phi(x_t)$, and decide the number of steps $\alpha_t$ by optimizing the sum of predicted rewards obtained by the neural network and the associated confidence term as follows: 
$
    \alpha_t = \arg\max_{\alpha \in \mathcal{A}} 
    \left[ f_\theta(\phi(x_t), \alpha) 
    + \beta_t \cdot \sigma_t(\phi(x_t), \alpha) \right],
$
where $\sigma_t(\phi(x_t), \alpha)$ is the confidence term and $\beta_t > 0$ controls the exploration strength.  The term $\sigma_t(\phi(x), \alpha)$ is computed using the gradient 
features of the neural network as in \cite{zhou2020neural}. Let $g_\theta(x, \alpha) = \nabla_\theta f_\theta(\phi(x), \alpha)$ denote the gradient of the neural network output with respect to its parameters. We maintain a covariance matrix
$
    V_t = \lambda I + \sum_{i=1}^{t-1} g_\theta(x_i, \alpha_i)\, g_\theta(x_i, \alpha_i)^\top,
$
where $\lambda > 0$ is a regularization parameter. The uncertainty is then defined as
$
    \sigma_t(\phi(x), \alpha) 
    = \sqrt{ g_\theta(x, \alpha)^\top V_t^{-1} g_\theta(x, \alpha) }.
$

Intuitively, $g_\theta(x,\alpha)$ represents the sensitivity of the predicted reward to the model parameters. If a context-action pair is similar to previously observed samples, its gradient direction is well-covered in $V_t$, resulting in a small uncertainty. Conversely, if it lies in a poorly explored region of the feature space, the corresponding direction has low coverage in $V_t$, leading to higher uncertainty. Thus, $\sigma_t(\phi(x), \alpha)$ quantifies how well the model has learned the reward for that context-action pair, naturally encouraging exploration of uncertain regions.

After executing the trajectory with $\alpha_t$ steps, reward $r_t = r(x_t, \alpha_t)$ is calculated as in Eq~\ref{eq:reward}. The model parameters are updated using all observed data and decision $\mathcal{D}_t = \{(\phi(x_i), \alpha_i, r_i)\}_{i=1}^t$. Specifically, the parameters are obtained by minimizing the empirical squared loss as follows:
\begin{equation}
    \theta_t = \arg\min_{\theta} 
    \sum_{i=1}^{t} 
    \left( f_\theta(\phi(x_i), \alpha_i) - r_i \right)^2.
\end{equation}
In practice, this optimization is performed periodically over a replay buffer of past interactions, allowing the model to progressively 
refine its reward estimates, leveraging both current context and historical data. We keep this buffer size as $P$.

\subsection{Algorithm: \algo{}}
We summarize our proposed method \algo{} in Algorithm~\ref{alg:ncb}. It takes as input a pretrained flow matching model $\mathcal{M}$, an action space $\mathcal{A}$ of step budgets, a neural reward model $f_{\theta}$, a regularization parameter $\lambda$, and an update interval $P$ controlling the frequency of training. We initialize the neural network parameter $\theta$, the covariance matrix $V_0 = \lambda I$ for uncertainty estimation, and a replay buffer $\mathcal{D}$. At each iteration $t$, a prompt $x_t$ is received and encoded into a context feature $\phi(x_t)$.

For $t \leq P$, actions are selected using an $\epsilon$-greedy strategy over $\mathcal{A}$ to encourage sufficient early exploration (line 5). After this warm-up phase, we switch to a contextual bandit regime in which actions $\alpha_t \in \mathcal{A}$ are chosen based on neural network-based reward estimates and their optimistic upper confidence bounds (line 7). We observe the trajectory generated by the flow-matching model with a solver schedule involving $\alpha_t$ steps. A reward is then computed from velocity variation across timesteps (Eq.~\ref{eq:reward}), capturing the surrogate trade-off between trajectory variation and computational cost. The tuple $(\phi(x_t),\alpha_t,r_t)$ is stored in $\mathcal{D}$.

\begin{algorithm}[t]
\caption{\algo{}: Contextual Flow Matching}
\label{alg:ncb}
\small
\textbf{Input:} Pretrained FM model $\mathcal{M}$, action set $\mathcal{A}$, neural reward model $f_\theta$, exploration schedule $\{\beta_t\}$, regularization $\lambda$, update interval $P$ \\
\textbf{Initialize:} Parameters $\theta$, covariance matrix $V_0=\lambda I$, replay buffer $\mathcal{D}\leftarrow\emptyset$
\begin{algorithmic}[1]
\FOR{each iteration $t=1,2,\dots$}
    \STATE Receive input $x_t$; extract context $\phi(x_t)$; compute $g_\theta(x_t,\alpha)=\nabla_\theta f_\theta(\phi(x_t),\alpha)$
    \STATE \textbf{Action selection:}
    \IF{$t\leq P$}
        \STATE Sample $\alpha_t$ using an $\epsilon$-greedy policy over $\mathcal{A}$
    \ELSE
        \STATE $\displaystyle \alpha_t=\arg\max_{\alpha\in\mathcal{A}}\left[f_\theta(\phi(x_t),\alpha)+\beta_t\sqrt{g_\theta(x_t,\alpha)^\top V_{t-1}^{-1}g_\theta(x_t,\alpha)}\right]$
    \ENDIF
    \STATE Initialize $x_{t_0}\sim\pi_0$ and generate the sample using $\mathcal{M}$ with $\alpha_t$ integration steps
    \STATE \textbf{Trajectory variation:} Compute $\hat v(x_t,\alpha_t)$ using Eq.~\ref{eq:velocity_change}
    \STATE \textbf{Reward:} $r_t=\max\!\left(0,(1-\hat v(x_t,\alpha_t))-\mu\alpha_t\right)$
    \STATE \textbf{Update:} $V_t=V_{t-1}+g_\theta(x_t,\alpha_t)g_\theta(x_t,\alpha_t)^\top$
    \STATE \textbf{Store:} $\mathcal{D}\leftarrow\mathcal{D}\cup\{(\phi(x_t),\alpha_t,r_t)\}$
    \STATE \textbf{If} {$t\bmod P=0$}
        \textbf{then} $\displaystyle \theta\leftarrow\arg\min_\theta\sum_{(\phi(x_i),\alpha_i,r_i)\in\mathcal{D}}\left(f_\theta(\phi(x_i),\alpha_i)-r_i\right)^2$
\ENDFOR
\end{algorithmic}
\end{algorithm}

\noindent\textbf{Theoretical Analysis.}
We characterize the dependence of numerical integration error on the step budget. Our analysis considers forward Euler under standard regularity conditions and shows that the error scales as $O(1/\alpha)$, with an input-dependent constant determined by variation of the conditional flow dynamics.

\begin{restatable}{thm}{main}
\label{thm:main}
For an input $x_i$, consider the conditional flow ODE
\[
\frac{dz_i(s)}{ds}=v_i(z_i(s),s),\qquad z_i(0)\sim\pi_0,\qquad s\in[0,1],
\]
where $v_i$ denotes the velocity field conditioned on $x_i$. Let $\hat z_{i,\alpha}^{(\alpha)}$ be the terminal state of its uniform $\alpha$-step forward-Euler approximation with $h=1/\alpha$, and define
$ E_i(\alpha):=\left\|z_i(1)-\hat z_{i,\alpha}^{(\alpha)}\right\|.$
Assume that $v_i(\cdot,s)$ is $L$-Lipschitz in its state argument and continuously differentiable along the exact trajectory, with
$
\left\|\partial_s v_i(z_i(s),s)+J_zv_i(z_i(s),s)v_i(z_i(s),s)\right\|\le M_i,\ s\in[0,1].
$
Then, for $L>0$,
$E_i(\alpha)\le\frac{M_i(e^L-1)}{2L\alpha}.$
Consequently, for any sequence $\{\alpha_i\}_{i=1}^{T}$,
\[
\sum_{i=1}^{T}E_i(\alpha_i)
\le\frac{e^L-1}{2L}\sum_{i=1}^{T}\frac{M_i}{\alpha_i}.
\]
For $L=0$, the bound reduces to $E_i(\alpha)\le M_i/(2\alpha)$.
\end{restatable}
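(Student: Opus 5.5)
The plan is the classical global error analysis for forward Euler: a local truncation bound, a one-step error recursion driven by the Lipschitz constant, and a discrete Gronwall summation. Fix an input $x_i$ and write $z(s)=z_i(s)$, $h=1/\alpha$, $s_k=kh$. Let $\hat z_k=\hat z^{(\alpha)}_{i,k}$ denote the Euler iterates, started from the same initial point $\hat z_0=z(0)$, and set the global error $e_k=\|z(s_k)-\hat z_k\|$, so that $e_0=0$. Along the exact trajectory, the chain rule gives
\[
\ddot z(s)=\frac{d}{ds}v_i(z(s),s)=\partial_s v_i(z(s),s)+J_z v_i(z(s),s)\,v_i(z(s),s).
\]
Since $v_i$ is continuously differentiable along the trajectory, $z$ is $C^2$ and the hypothesis gives $\|\ddot z(s)\|\le M_i$ on $[0,1]$.

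\textbf{Local truncation error.} Using Taylor's theorem in integral form,
\[
z(s_{k+1})=z(s_k)+h\,v_i(z(s_k),s_k)+\tau_k,\qquad \tau_k=\int_{s_k}^{s_{k+1}}(s_{k+1}-u)\,\ddot z(u)\,du .
\]
The integral form avoids any mean-value issue for vector-valued functions. It yields $\|\tau_k\|\le M_i h^2/2$.

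\textbf{Error recursion.} Subtracting the Euler update $\hat z_{k+1}=\hat z_k+h\,v_i(\hat z_k,s_k)$ and applying the $L$-Lipschitz property in the state gives
\[
e_{k+1}\le (1+hL)\,e_k+\frac{M_i h^2}{2}.
\]
Unrolling from $e_0=0$ produces
\[
e_\alpha\le \frac{M_i h^2}{2}\sum_{j=0}^{\alpha-1}(1+hL)^j=\frac{M_i h}{2L}\bigl((1+hL)^\alpha-1\bigr).
\]
Finally, $(1+L/\alpha)^\alpha\le e^L$, so $E_i(\alpha)=e_\alpha\le M_i(e^L-1)/(2L\alpha)$.

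\textbf{Degenerate case and the sum.} When $L=0$, the geometric sum equals $\alpha$, and the same recursion gives $e_\alpha\le \alpha\cdot M_i h^2/2=M_i/(2\alpha)$. This is also the $L\to0$ limit of $(e^L-1)/L$. The bound on $\sum_{i=1}^T E_i(\alpha_i)$ then follows by applying the per-input bound to each $i$ with its own $\alpha_i$ and summing, because the constant $(e^L-1)/(2L)$ is uniform in $i$.

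\textbf{Main obstacle.} The delicate point is justifying $\|\ddot z\|\le M_i$. The bound is only assumed \emph{along the exact trajectory}, so the Taylor expansion must be taken around $z(s)$ and never around the Euler iterates. The Lipschitz property is the only assumption used at the numerical points $\hat z_k$, and the argument above is arranged so that this is the case. I will also state explicitly that the exact and numerical trajectories share the initial condition $z_i(0)$. If $\hat z_0$ were drawn independently from $\pi_0$, then $e_0\neq0$ and the bound would not hold.
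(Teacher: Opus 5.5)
Your proposal is correct and follows essentially the same route as the paper's proof. Both arguments use the local truncation bound $\|\tau_k\|\le M_i h^2/2$, the Lipschitz recursion $\|e_{k+1}\|\le(1+Lh)\|e_k\|+M_ih^2/2$ from $e_0=0$, the geometric sum with $(1+L/\alpha)^\alpha\le e^L$, a separate $L=0$ case, and summation over inputs. The only cosmetic difference is how $\tau_k$ is written: you use the integral-form Taylor remainder $\int_{s_k}^{s_{k+1}}(s_{k+1}-u)\ddot z(u)\,du$, while the paper writes $\int_{t_k}^{t_{k+1}}[v_i(z_i(s),s)-v_i(z_i(t_k),t_k)]\,ds$ and bounds the integrand by $M_i(s-t_k)$; the two expressions coincide by Fubini.
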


The proof is provided in Appendix~\ref{sec:proof}. Theorem~\ref{thm:main} establishes the $O(1/\alpha)$ Euler error dependence, with the input-dependent constant $M_i$ bounding the rate of change of the velocity field along the exact trajectory. The statistic $\hat v(x,\alpha)$ in Eq.~\ref{eq:velocity_change} is an observable relative-variation surrogate motivated by this dependence; it is not assumed to directly estimate $M_i$ or the discretization error. Empirically, $\hat v$ exhibits a strong within-trajectory association with deviation from a high-step reference across integration budgets (Appendix~\ref{sec:L1_as_proxy}).

\noindent\textbf{Bandit Regret.}
Separately, the standard NeuralUCB analysis~\citep{zhou2020neural} provides a $\widetilde{O}(\sqrt{T})$ pseudo-regret guarantee under its stated assumptions. We use this result as a conditional characterization of the UCB component of \algo{} with respect to the surrogate reward in Eq.~\ref{eq:reward}. This result concerns learning the surrogate step-selection objective, whereas Theorem~\ref{thm:main} characterizes Euler discretization error as a function of the step budget.

\noindent\textbf{Computational Overhead.}
The contextual-bandit module introduces additional computation for uncertainty estimation and model updates. It uses a lightweight neural network ($d\sim30$K parameters), with the covariance inverse updated incrementally using rank-one Sherman--Morrison updates. The resulting forward, backward, and uncertainty computations are small relative to evaluations of the generative flow model, and their latency is included in all reported inference times. Detailed overhead measurements are provided in Appendix~\ref{sec:compute_for_contextual_bandit}.

    \section{Experiments}
    \label{sec:experiments}

We evaluate our method across three settings: text-to-image generation, image editing, and text-to-video synthesis. We first describe the datasets, followed by the baselines used for comparison.

\para{Datasets.}
For text-to-image generation, we use the \textbf{GenEval} benchmark \citep{ghosh2023geneval}, consisting of $553$ prompts designed to evaluate compositional reasoning. The prompts test capabilities such as \textit{object presence}, \textit{spatial relations}, \textit{attribute binding}, and \textit{counting}, providing a fine-grained measure of semantic alignment.

For image editing, we adopt \textbf{GEdit} \citep{liu2025step1x}, which contains $606$ real-world editing instructions spanning \textit{object manipulation}, \textit{appearance changes}, \textit{layout modifications}, and \textit{style transfer}. This enables evaluation of both localized and global transformations.
For video generation, we use \textbf{VBench} \citep{huang2024vbench} and construct a subset of $80$ prompts by uniformly sampling $5$ prompts from each of its $16$ evaluation dimensions. This ensures balanced coverage of factors such as \textit{motion consistency}, \textit{object persistence}, \textit{camera dynamics}, and \textit{scene composition}.

\para{Baselines.}
We compare against representative methods spanning full-generation, training-based acceleration, and inference-time approaches. \textbf{Full Generation} performs standard sampling over the complete trajectory without approximation, serving as an upper bound on fidelity, while \textbf{TeaCache} \cite{liu2025timestep} improves efficiency by caching intermediate representations across timesteps at the cost of minor approximation error.

\textbf{InstaFlow} \cite{liu2023instaflow} compresses multi-step generation into one or few steps via distillation, achieving large speedups but reduced visual and compositional quality. \textbf{PeRFlow} \cite{yan2024perflow} enforces piecewise-linear velocity trajectories through additional training for stable few-step generation. \textbf{FlowCast} \cite{bajpai2026flowcast} uses speculative decoding to predict future steps in parallel, reducing wall-clock time but increasing NFEs.

\textbf{FastFlow} \cite{bajpai2026fastflow} skips evaluations using a bandit-based smoothness heuristic, while \textbf{AdaDiff} \cite{zhang2025adadiff} formulates step selection as reinforcement learning guided by external quality metrics, introducing overhead and potential misalignment with semantic fidelity.

In contrast, \textbf{\algo{}} uses a neural contextual bandit to predict an input-adaptive step budget before sampling and executes the full trajectory accordingly, enabling context-aware computation while preserving discretization consistency; results include all samples encountered during exploration.

All baselines are evaluated using their official implementations and hyperparameters (see Appendix \ref{tab:hyperparams}). Table \ref{tab:main_results} reports results under released settings for all the baselines. We set $\mu = 1/\max_{\alpha\in\mathcal{A}}\alpha$ to normalize the computational-cost term relative to the available step budgets. The parameter $\mu$ controls the trade-off between trajectory variation and computational cost; its effect is examined in Figures~\ref{fig:res_editing} and~\ref{fig:res_video}. All the experiments were performed on a set of 2 NVIDIA A6000 GPUs.

\begin{table}[t]
\centering
\small
\setlength{\tabcolsep}{5pt}
\renewcommand{\arraystretch}{1.2}
\begin{tabular}{c|cccc|cccc}
\toprule
\textbf{Method} 
& \multicolumn{4}{c|}{\textbf{BAGEL}} 
& \multicolumn{4}{c}{\textbf{FLUX}} \\

\cmidrule(lr){2-5} \cmidrule(lr){6-9}
& Overall $\uparrow$ & IQA $\uparrow$ & NFE $\downarrow$ & Time $\downarrow$
& Overall $\uparrow$ & IQA $\uparrow$ & NFE $\downarrow$ & Time $\downarrow$ \\
\midrule

\multicolumn{9}{c}{\textit{Full Model}} \\
\midrule
Full-50 & \textbf{0.78} & \textbf{0.65} & 50 & 76.4 
        & \textbf{0.66} & \textbf{0.75} & 50 & 34.1 \\
Full-25 & 0.75 & 0.62 & 25 & 40.9 
        & 0.64 & 0.71 & 25 & 17.3 \\
Full-10 & 0.71 & 0.54 & 10 & 18.9 
        & 0.57 & 0.63 & 10 & 7.26 \\

\midrule
\multicolumn{9}{c}{\textit{Static Baselines}} \\
\midrule
InstaFlow & 0.33 & 0.49 & 1 & 2.1 
          & -- & -- & -- & -- \\
PerFlow   & 0.58 & 0.55 & 10 & 16.5 
          & -- & -- & -- & -- \\
TeaCache  & 0.74 & 0.59 & 29.4 & 45.8 
          & 0.63 & 0.72 & 28.4 & 20.8 \\

\midrule
\multicolumn{9}{c}{\textit{Adaptive Methods}} \\
\midrule
AdaDiff  & 0.72 & 0.58 & 28.1 & 43.6 
         & 0.59 & 0.67 & 25.8 & 18.2 \\
FlowCast & 0.76 & 0.60 & 21.5 & 34.2 
         & 0.65 & 0.72 & -- & 17.9 \\
FastFlow & 0.75 & 0.59 & 20.9 & 32.9 
         & 0.64 & 0.70 & 23.6 & 16.6 \\

\midrule
\multicolumn{9}{c}{\textit{Ours}} \\
\midrule
\textbf{\algo{}} & \textbf{0.77} & \textbf{0.63} & \textbf{17.7} & \textbf{27.5}
                  & \textbf{0.65} & \textbf{0.74} & \textbf{19.5} & \textbf{15.1} \\

\bottomrule
\end{tabular}
\caption{Comparison of methods on BAGEL and FLUX. We report generation quality (GenEval, IQA) along with efficiency metrics (number of function evaluations and inference time).}
\label{tab:main_results}
\vspace{-4mm}
\end{table}

\textbf{Models.} To demonstrate the versatility of our approach, we evaluate across multiple state-of-the-art models: for image generation, we use BAGEL \citep{deng2025emerging}, Flux-Kontext \citep{labs2025flux}, PeRFlow \citep{yan2024perflow} and StableDiffusion models \citep{podell2023sdxl}; for image editing, we use BAGEL and Flux-Kontext;  for video generation, we use the HunYuanVideo model \citep{kong2024hunyuanvideo}. 

\para{Training the neural contextual bandit.}
We model the expected reward using a two-layer MLP $f_\theta(\phi(x), \alpha)$ with ReLU activations and hidden dimension $64$ (adapting from NeuralUCB \cite{zhou2020neural}). The context $\phi(x)$ combines 8-dimensional handcrafted complexity features (see Appendix \ref{sec:context_features} and Table \ref{tab:prompt_features}) with projected semantic embeddings from a pretrained SentenceTransformer (all-MiniLM-L6-v2) \citep{wang2020minilm} (more details in Appendix \ref{sec:choice_of_minilm}). The Neural Bandit inputs the context and the action and predicts a reward for the given action.

The model is trained online by storing $(\phi(x_t), \alpha_t, r_t)$ in a replay buffer $\mathcal{D}$ and performing updates every $P$ iterations via MSE minimization over $\mathcal{D}$, using Adam \citep{kingma2014adam} with learning rate $5 \times 10^{-3}$ for $20$ epochs. Exploration is handled via a cold-start phase (each arm pulled once), followed by a $\epsilon$-greedy strategy, and subsequently a NeuralUCB-based action selection rule.

\begin{figure}[!ht]
    \centering
    
    \begin{subfigure}{0.97\textwidth}
        \centering
        \includegraphics[width=\linewidth]{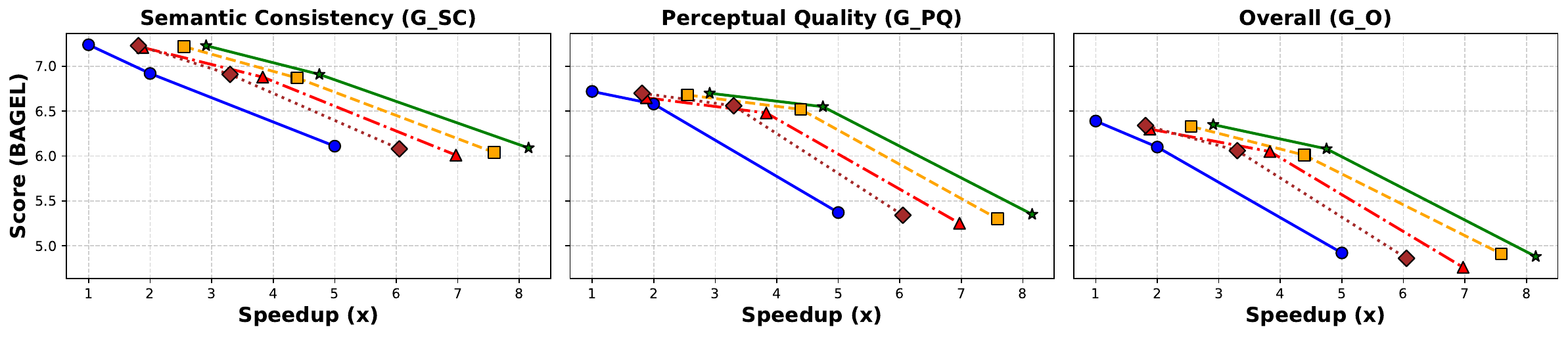}
    \end{subfigure}
    
    \vspace{0.5em}

    \begin{subfigure}{0.97\textwidth}
        \centering
        \includegraphics[width=\linewidth]{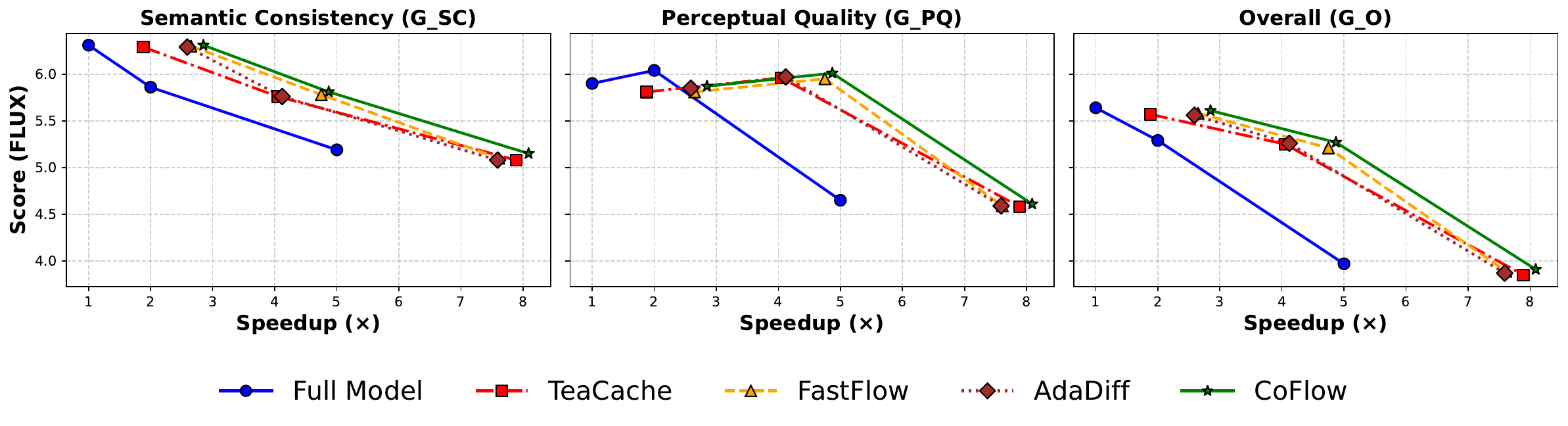}
    \end{subfigure}
    
    \vspace{-0.3cm}

    \caption{Comparison of edit quality across two models: BAGEL and FLUX. Each subfigure reports semantic consistency (G\_SC), perceptual quality (G\_PQ), and overall score (G\_O) versus speedup.}
    \label{fig:res_editing}
    \vspace{-3mm}
\end{figure}

\begin{figure}[!ht]
    \centering
    \includegraphics[scale = 0.29]{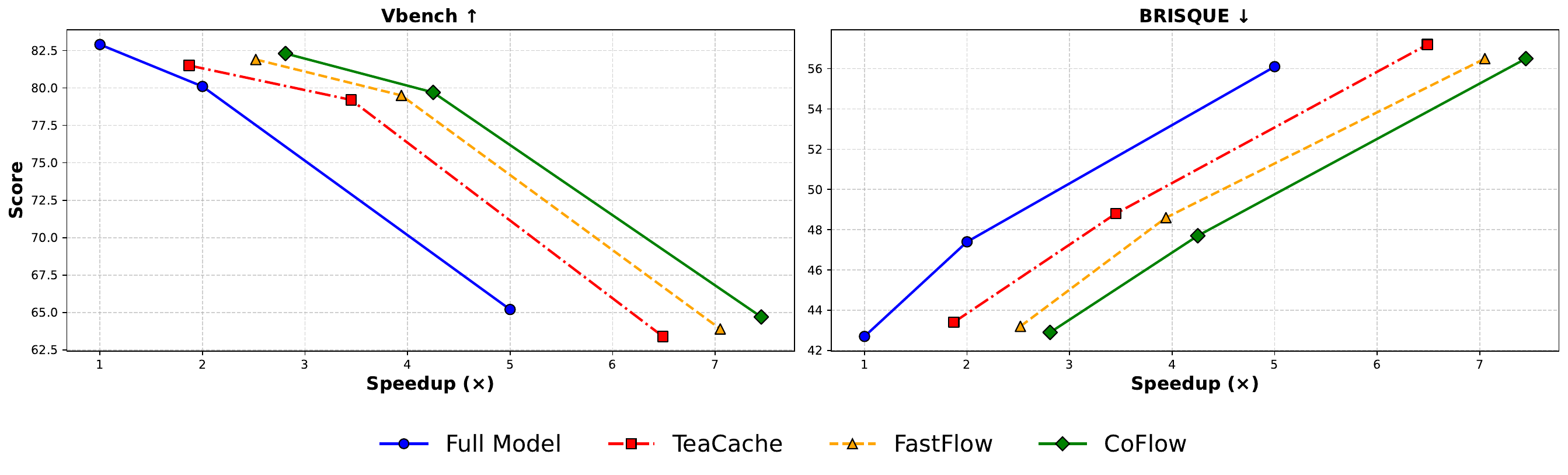}
    \caption{Comparison of video generation score and quality where we report the VBench score and the BRISQUE metric that assesses the quality of video.}
    \label{fig:res_video}
    \vspace{-4mm}
\end{figure}

\subsection{Results}

\para{Image generation.}
Table~\ref{tab:main_results} reports results on GenEval for BAGEL and FLUX, with additional evaluation on the optimized PeRFlow model in Table~\ref{tab:res_perflow}. GenEval measures fine-grained semantic alignment across compositional dimensions, while CLIP-IQA evaluates perceptual quality. On BAGEL, \algo{} achieves an overall score of $0.77$ with $17.7$ NFE, compared with $0.78$ at $50$ NFE for full inference; on FLUX, it achieves $0.65$ with $19.5$ NFE, compared with $0.66$ at $50$ NFE. These results demonstrate substantial computation reduction with a small change in aggregate generation quality. Category-wise results are provided in Appendix~\ref{sec:expanded_res}.

\para{Image editing.}
Figure~\ref{fig:res_editing} presents results on GEdit using BAGEL, FLUX, and Step-1X-Edit. We report G\_SC (semantic consistency), G\_PQ (perceptual quality), and G\_O (overall edit quality), evaluated using GPT-4.1~\citep{achiam2023gpt} as an automatic judge. Across the evaluated operating points, \algo{} reduces the integration budget while maintaining competitive semantic consistency and perceptual quality, demonstrating that adaptive step selection also extends to instruction-based image editing.

\para{Video generation.}
Figure~\ref{fig:res_video} reports results on VBench, which evaluates motion dynamics, temporal consistency, and scene composition; we additionally report BRISQUE for frame-level perceptual quality. \algo{} reduces the integration budget while maintaining competitive video-generation performance, indicating that input-adaptive step selection extends beyond image generation to temporally structured outputs.

\para{Summary.}
Across image generation, editing, and video synthesis, \algo{} consistently reduces inference computation while maintaining competitive output quality. The method selects an input-dependent integration budget and then executes the corresponding solver trajectory without modifying the pretrained generative model. Ablations further show that conditioning the policy on input context improves the observed quality--efficiency trade-off relative to a non-contextual variant (Appendix~\ref{sec:context_matters}). A finer-grained analysis further examines the contributions of MiniLM embeddings and handcrafted prompt features. Additional online ablations show that the policy stabilizes over the interaction sequence and is robust across prompt orderings, while uncertainty-aware UCB exploration improves both the surrogate objective and computational efficiency relative to an otherwise identical greedy policy (Appendix~\ref{sec:context_matters}). Finally, \algo{} can be combined with existing inference-time acceleration methods such as FastFlow and FlowCast for additional runtime reductions (Appendix~\ref{sec:complementary_method}).

\para{Limitations.}\label{sec:limitations}
While \algo{} is lightweight and fully online, it requires a brief exploration phase to learn the contextual policy; however, note that all reported results already include exploration samples. Its effectiveness also depends on the expressiveness of the prompt features, which are intentionally designed to be simple and efficient. More expressive features could further improve adaptation quality, but at the cost of additional computational overhead. In practice, both the exploration phase and lightweight feature design stabilize quickly and have minimal impact on overall performance.

    \section{Conclusion}
    \label{sec:conclusion}

We introduced \algo{}, an online framework for input-adaptive step selection in flow-based generative models. \algo{} formulates step-budget selection as a neural contextual bandit, using lightweight prompt representations and an intrinsic trajectory-variation surrogate to adapt computation without retraining the underlying generative model. Across image generation, image editing, and video synthesis, \algo{} substantially reduces inference computation while maintaining competitive semantic and perceptual quality across multiple backbones. We further provide a forward-Euler analysis establishing an $O(1/K)$ discretization-error dependence under standard regularity conditions, together with empirical evidence that the proposed velocity-variation statistic tracks within-trajectory sensitivity to temporal discretization. Our results demonstrate that online contextual step selection provides a practical mechanism for adapting inference computation across inputs while remaining compatible with pretrained flow models and existing inference-time acceleration techniques.





    
    \clearpage
    \bibliographystyle{unsrt}
    \bibliography{references}
    
    \clearpage
    \appendix
\section{Appendix}

\subsection{Proof of Theorem~\ref{thm:main}}
\label{sec:proof}

\begin{proof}
Fix an input $x_i$ and a step budget $\alpha$. Let $t_k=kh$, where $h=1/\alpha$ and $k=0,\ldots,\alpha$.
Let $z_i(s)$ denote the exact solution and $\hat z_{i,k}^{(\alpha)}$ the Euler iterate,
\[
\hat z_{i,k+1}^{(\alpha)}=\hat z_{i,k}^{(\alpha)}+h\,v_i(\hat z_{i,k}^{(\alpha)},t_k).
\]
Define the error
\[
e_k:=z_i(t_k)-\hat z_{i,k}^{(\alpha)}.
\]
Since both trajectories start from the same initial state, $e_0=0$.

By the fundamental theorem of calculus,
\[
z_i(t_{k+1})=z_i(t_k)+h\,v_i(z_i(t_k),t_k)+\tau_k,
\]
where the local truncation error is
\[
\tau_k:=\int_{t_k}^{t_{k+1}}\left[v_i(z_i(s),s)-v_i(z_i(t_k),t_k)\right]ds.
\]
Along the exact trajectory, the chain rule gives
\[
\frac{d}{ds}v_i(z_i(s),s)=\partial_s v_i(z_i(s),s)+J_zv_i(z_i(s),s)v_i(z_i(s),s).
\]
By the assumption of Theorem~\ref{thm:main},
\[
\left\|\frac{d}{ds}v_i(z_i(s),s)\right\|\le M_i.
\]
Hence,
\[
\left\|v_i(z_i(s),s)-v_i(z_i(t_k),t_k)\right\|\le M_i(s-t_k),
\]
and therefore
\[
\|\tau_k\|\le\int_{t_k}^{t_{k+1}}M_i(s-t_k)\,ds=\frac{M_i h^2}{2}.
\]

Subtracting the Euler update from the exact update yields
\[
e_{k+1}=e_k+h\left[v_i(z_i(t_k),t_k)-v_i(\hat z_{i,k}^{(\alpha)},t_k)\right]+\tau_k.
\]
Using the $L$-Lipschitz property of $v_i$,
\[
\|e_{k+1}\|\le(1+Lh)\|e_k\|+\frac{M_i h^2}{2}.
\]
Since $e_0=0$, iterating the recursion gives
\[
\|e_\alpha\|\le\frac{M_i h^2}{2}\sum_{k=0}^{\alpha-1}(1+Lh)^k.
\]
For $L>0$,
\[
\sum_{k=0}^{\alpha-1}(1+Lh)^k=\frac{(1+Lh)^\alpha-1}{Lh}.
\]
Therefore,
\[
\|e_\alpha\|\le\frac{M_i h}{2L}\left[(1+Lh)^\alpha-1\right].
\]
Since $h=1/\alpha$ and $\left(1+\frac{L}{\alpha}\right)^\alpha\le e^L$, we obtain
\[
E_i(\alpha)=\|e_\alpha\|\le\frac{M_i(e^L-1)}{2L\alpha}.
\]

For $L=0$, the recursion reduces to
\[
\|e_{k+1}\|\le\|e_k\|+\frac{M_i h^2}{2},
\]
and hence
\[
E_i(\alpha)\le\frac{\alpha M_i h^2}{2}=\frac{M_i}{2\alpha}.
\]

Finally, summing over any sequence of step budgets $\{\alpha_i\}_{i=1}^{T}$ gives, for $L>0$,
\[
\sum_{i=1}^{T}E_i(\alpha_i)\le\frac{e^L-1}{2L}\sum_{i=1}^{T}\frac{M_i}{\alpha_i}.
\]
This completes the proof.
\end{proof}

\paragraph{Discussion of the assumptions.}
The result requires standard regularity conditions for forward Euler: Lipschitz continuity of the velocity field in the state variable and a uniform bound $M_i$ on the derivative of the velocity field along the exact trajectory.
The constant $M_i$ captures input-dependent variation in the conditional flow dynamics.
The theorem characterizes numerical integration error independently of the surrogate reward used by \algo{} for adaptive step selection.

\begin{figure}
    \centering
    \includegraphics[width=1.0\linewidth]{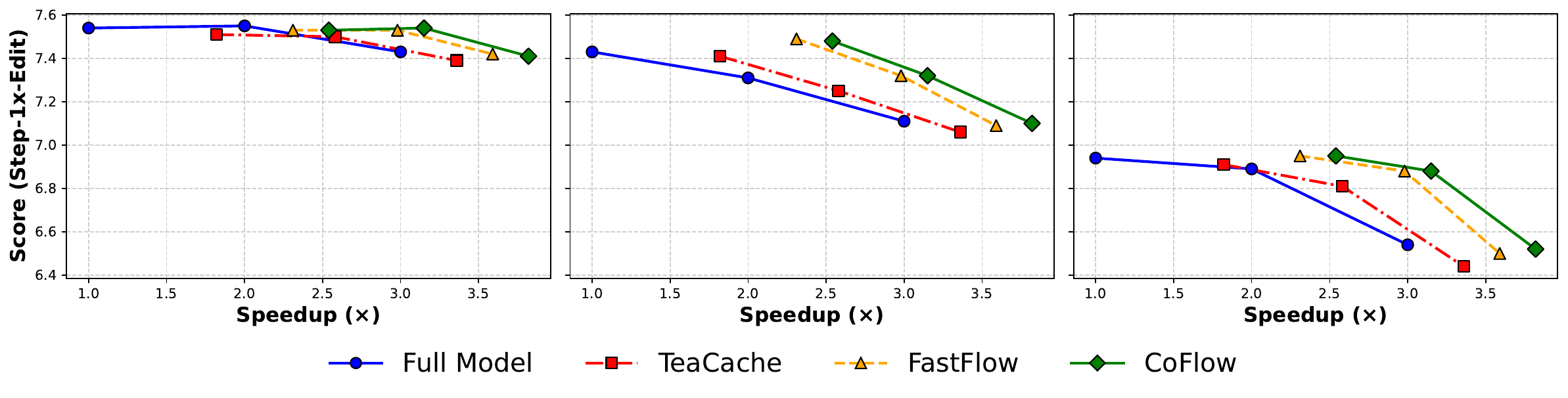}
    \caption{Comparison of edit quality on Step-1X-Edit model. Each subfigure reports semantic consistency (G\_SC), perceptual quality (G\_PQ), and overall score (G\_O) versus speedup.}
    \label{fig:res_edit_step_edit}
\end{figure}

\begin{figure}
    \centering
    \includegraphics[width=1.0\linewidth]{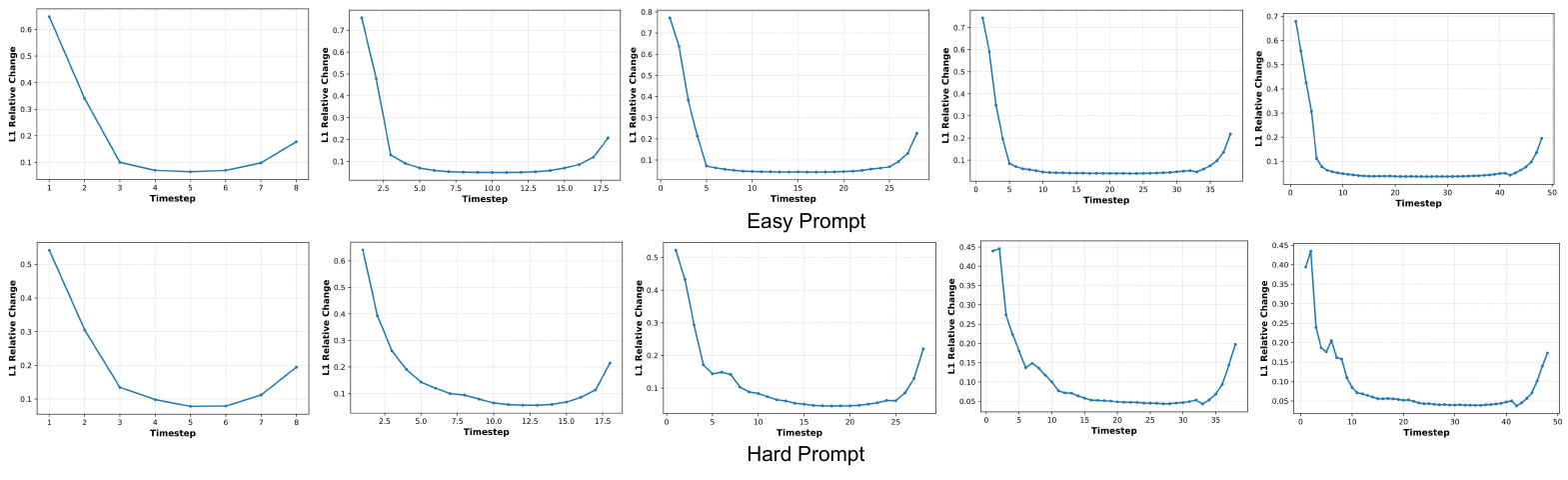}
    \caption{\textbf{L1-relative velocity variation across integration trajectories.} Velocity-change profiles vary across prompts and integration budgets, illustrating the trajectory-level signal used by \algo{}ƒ. Quantitative analysis against a 100-step reference is provided in Appendix~\ref{sec:L1_as_proxy}.}
    \label{fig:reward_justification}
\end{figure}

\subsection{Expanded Results on the GenEval Dataset}
\label{sec:expanded_res}

Tables~\ref{tab:res_expanded_BAGEL} and~\ref{tab:res_expanded_flux} provide a category-wise breakdown on GenEval across Single Object (SO), Two Object (TO), Counting (CO), Color (CL), Color Attribute (CA), and Position (PO). This analysis complements the aggregate results by examining semantic alignment and compositional performance across different prompt categories.

Across both backbones, \algo{} maintains competitive performance across the GenEval categories while using substantially fewer function evaluations than full-step inference. The category-wise results further show that the efficiency gains are not restricted to a single type of prompt, with \algo{} retaining comparable performance across object composition, counting, attribute binding, and spatial-relation tasks. These results provide a more detailed view of the quality--efficiency trade-off observed in the aggregate evaluation.

\subsection{Computational Overhead of the Contextual Bandit}\label{sec:compute_for_contextual_bandit}
\begin{wraptable}{r}{0.31\textwidth}
\vspace{-10pt}
\centering
\small
\setlength{\tabcolsep}{5pt}
\renewcommand{\arraystretch}{1.2}
\begin{tabular}{l c}
\toprule
\textbf{Method} & \textbf{Time (s)} \\
\midrule

\multicolumn{2}{c}{\textit{Full Model}} \\
\midrule
Full-50 & 75.6 \\
Full-25 & 40.1 \\

\midrule
\multicolumn{2}{c}{\textit{Ours}} \\
\midrule
NCB-50 & \textbf{76.4} \\
NCB-25 & \textbf{40.9} \\

\bottomrule
\end{tabular}
\vspace{-5pt}
\caption{Inference time comparison of \algo{}.}
\label{tab:time_comparison}
\vspace{-10pt}
\end{wraptable}
Table~\ref{tab:time_comparison} reports the inference time for standard generation using fixed step budgets (50 and 25 steps) and compares it with our method under the same fixed budgets (NCB-50 and NCB-25). This controlled setup isolates the overhead introduced by the contextual bandit, independent of any step reduction.

The results show that the additional latency incurred by our method is negligible ($<1$ second), demonstrating that the decision-making module introduces minimal runtime cost. Furthermore, training the neural bandit is computationally lightweight: even with a large number of interaction samples, the full training process completes within a few seconds ($\sim$3-4 seconds in our setup). We include this time while calculating the average inference time reported in our result Tables.

This efficiency stems from the simplicity of the underlying model, which consists of a small multilayer perceptron with $\approx30k$ parameters and low-dimensional inputs. Consequently, our approach achieves adaptive inference with virtually no computational overhead, making it practical for real-time deployment.

\subsection{Complementarity of Our Approach}
\label{sec:complementary_method}

Table~\ref{tab:complementary_results} evaluates \algo{} in combination with existing inference-time acceleration methods. While these methods reduce computation within a given sampling procedure, \algo{} operates at the level of selecting an input-dependent integration budget. The two mechanisms can therefore be applied jointly.

For each combination, \algo{} first selects the integration budget, after which the corresponding acceleration method is applied during generation. As shown in Table~\ref{tab:complementary_results}, combining adaptive step selection with TeaCache, FastFlow, and FlowCast further reduces runtime relative to using each acceleration method alone, although the magnitude of the gain and its effect on generation quality vary across methods.

These results demonstrate that adaptive step-budget selection can be combined with existing inference-time acceleration techniques, providing an additional mechanism for reducing computation without requiring modifications to the underlying generative model.

\begin{figure}
    \centering
    \includegraphics[scale=0.5]{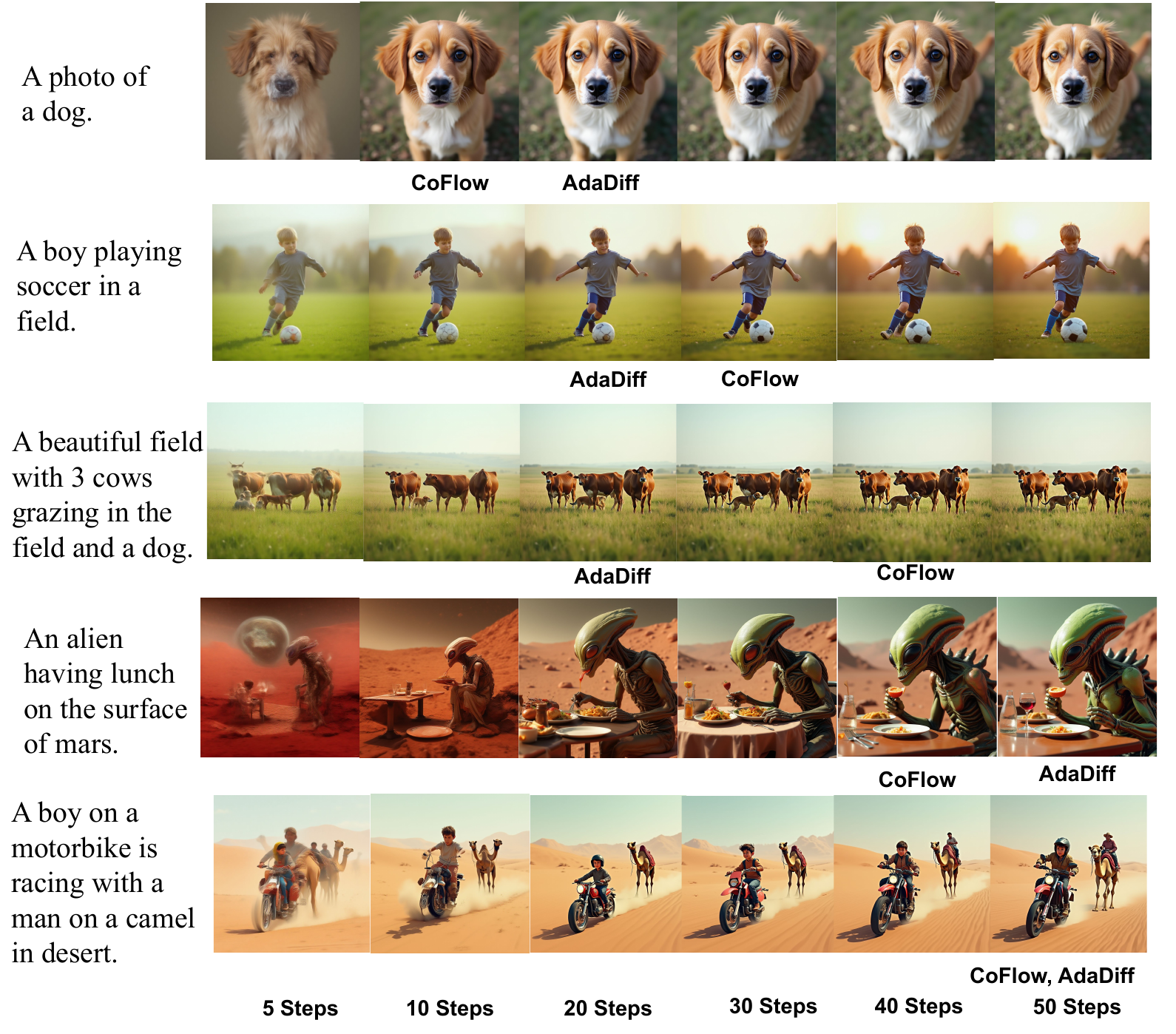}
    \caption{Qualitative examples showcasing the choices of AdaDiff and \algo{}.}
    \label{fig:qualitative_res}
\end{figure}

\subsection{Qualitative Analysis}

Figure~\ref{fig:qualitative_res} presents qualitative comparisons between \algo{} and AdaDiff together with their selected step budgets. AdaDiff uses a policy learned offline, whereas \algo{} updates its step-selection policy online from observed interactions.

The examples illustrate differences in step-allocation behavior between the two approaches. AdaDiff frequently selects either a relatively small budget or the maximum budget, while \algo{} also assigns intermediate step budgets across inputs. This behavior is consistent with the contextual online formulation of \algo{}, which allows its decisions to depend on the current prompt representation and accumulated observations.

Across the examples, \algo{} reduces the integration budget while retaining visually comparable outputs. These qualitative results complement the quantitative evaluations by illustrating how adaptive step selection can produce different computational allocations across individual inputs.

\begin{table}[t]
\centering
\small
\setlength{\tabcolsep}{4pt}
\renewcommand{\arraystretch}{1.2}

\begin{tabular}{l|cccccc|ccc}
\toprule
\textbf{Method} 
& \multicolumn{6}{c|}{\textbf{Compositional Dimensions}} 
& \multicolumn{3}{c}{\textbf{Overall Metrics}} \\

\cmidrule(lr){2-7} \cmidrule(lr){8-10}

& \textbf{SO} & \textbf{TO} & \textbf{CO} & \textbf{CL} & \textbf{CA} & \textbf{PO}
& \textbf{Overall} & \textbf{CLIPIQA} & \textbf{NFE} \\

\midrule
\multicolumn{10}{c}{\textit{Full Model}} \\
\midrule
Full-50 & 0.99 & 0.90 & 0.81 & 0.85 & 0.59 & 0.54 & 0.78 & 0.65 & 50 \\
Full-25 & 0.99 & 0.90 & 0.76 & 0.81 & 0.60 & 0.49 & 0.75 & 0.62 & 25 \\
Full-10 & 0.99 & 0.85 & 0.67 & 0.78 & 0.56 & 0.46 & 0.71 & 0.54 & 10 \\

\midrule
\multicolumn{10}{c}{\textit{Static Baselines}} \\
\midrule
InstaFlow & 0.85 & 0.19 & 0.20 & 0.65 & 0.04 & 0.02 & 0.33 & 0.49 & 1 \\
PeRFlow   & 0.99 & 0.79 & 0.43 & 0.85 & 0.25 & 0.15 & 0.58 & 0.55 & 10 \\
TeaCache  & 0.99 & 0.87 & 0.76 & 0.81 & 0.58 & 0.47 & 0.74 & 0.59 & 29.4 \\

\midrule
\multicolumn{10}{c}{\textit{Adaptive Methods}} \\
\midrule
AdaDiff  & 0.99 & 0.85 & 0.76 & 0.78 & 0.53 & 0.46 & 0.72 & 0.58 & 28.1 \\
FlowCast & 0.99 & 0.90 & 0.77 & 0.84 & 0.62 & 0.50 & 0.76 & 0.60 & 21.5 \\
FastFlow & 0.99 & 0.89 & 0.76 & 0.82 & 0.60 & 0.49 & 0.75 & 0.59 & 20.9 \\

\midrule
\multicolumn{10}{c}{\textit{Ours}} \\
\midrule
\textbf{Ours-UCB} 
& \textbf{0.99} & \textbf{0.90} & \textbf{0.78} & \textbf{0.84} & \textbf{0.61} & \textbf{0.51}
& \textbf{0.77} & \textbf{0.63} & \textbf{17.7} \\

\bottomrule
\end{tabular}

\caption{Comparison across various Compositional Dimensions and overall performance for the BAGEL model. Our method achieves strong performance while significantly reducing the number of function evaluations (NFE).}
\label{tab:res_expanded_BAGEL}
\end{table}

\begin{table}[t]
\centering
\small
\setlength{\tabcolsep}{4pt}
\renewcommand{\arraystretch}{1.2}

\begin{tabular}{l|cccccc|ccc}
\toprule
\textbf{Method} 
& \multicolumn{6}{c|}{\textbf{Compositional Dimensions}} 
& \multicolumn{3}{c}{\textbf{Overall Metrics}} \\

\cmidrule(lr){2-7} \cmidrule(lr){8-10}

& \textbf{SO} & \textbf{TO} & \textbf{CO} & \textbf{CL} & \textbf{CA} & \textbf{PO}
& \textbf{Overall} & \textbf{CLIPIQA} & \textbf{NFE} \\

\midrule
\multicolumn{10}{c}{\textit{Full Model}} \\
\midrule
Full-50 & 0.97 & 0.79 & 0.75 & 0.78 & 0.48 & 0.20 & 0.66 & 0.75 & 34.1 \\
Full-25 & 0.97 & 0.78 & 0.71 & 0.75 & 0.45 & 0.18 & 0.64 & 0.71 & 17.3 \\
Full-10 & 0.97 & 0.65 & 0.58 & 0.67 & 0.41 & 0.15 & 0.57 & 0.63 & 7.26 \\

TeaCache & 0.97 & 0.76 & 0.73 & 0.74 & 0.44 & 0.18 & 0.63 & 0.72 & 20.8 \\

\midrule
\multicolumn{10}{c}{\textit{Ours}} \\
\midrule
\textbf{Ours-UCB} 
& \textbf{0.97} & \textbf{0.79} & 0.73 & \textbf{0.76} & \textbf{0.47} & \textbf{0.19}
& \textbf{0.65} & \textbf{0.74} & \textbf{15.1} \\

\bottomrule
\end{tabular}

\caption{Comparison of the Compositional Dimensions for the FLUX model. Our method maintains competitive compositional performance while reducing computation (NFE).}
\label{tab:res_expanded_flux}
\end{table}

\subsection{L1-Relative Velocity Change as a Surrogate for Trajectory Variation}
\label{sec:L1_as_proxy}

We use the L1-relative velocity change in Eq.~\ref{eq:velocity_change} as a lightweight, model-internal measure of variation along the generated flow trajectory. Larger values of $\hat v(x,\alpha)$ indicate stronger changes in the learned velocity field across integration timesteps and may therefore reflect greater sensitivity to coarse temporal discretization. Importantly, $\hat v(x,\alpha)$ is not a direct estimate of discretization error or perceptual quality, but an observable trajectory-level surrogate that requires neither a reference solution nor an external quality model.

Figure~\ref{fig:reward_justification} illustrates the velocity-change profiles across different prompts and integration budgets. To more directly examine the relationship with discretization sensitivity, we additionally compare generations obtained with step budgets $\{5,10,\ldots,50\}$ against a 100-step reference generated from the identical initial latent. We measure reference deviation using the $\ell_2$ distance between terminal latent states. Across eight representative GenEval prompts, $\hat v$ is strongly associated with reference deviation as the integration budget varies, with a mean within-prompt Spearman correlation of $0.924$ (median $0.970$); pooled across all prompt--budget pairs, the correlation is $\rho=0.822$ ($p<10^{-16}$). A similar association is observed using the perceptual LPIPS distance (Spearman $\rho=0.676$, $p<10^{-11}$), indicating that the trend is not specific to latent-space distance.

Both $\hat v$ and reference deviation vary systematically with the integration budget, so these correlations should be interpreted as evidence that $\hat v$ tracks \emph{within-trajectory discretization sensitivity}, rather than as evidence that it independently ranks generation difficulty across prompts. Indeed, when controlling for the step budget and comparing prompts at fixed $\alpha$, the mean cross-prompt Spearman correlation is $0.255$ in this pilot. Thus, our analysis supports $\hat v$ as an intrinsic indicator of trajectory variation while distinguishing this role from cross-prompt complexity estimation.

\begin{table}[t]
\centering
\begin{tabular}{lc}
\toprule
Analysis & Spearman $\rho$ \\
\midrule
Pooled prompt--budget pairs & 0.822 \\
Mean within-prompt & 0.924 \\
Median within-prompt & 0.970 \\
Mean cross-prompt at fixed budget & 0.255 \\
\bottomrule
\end{tabular}
\caption{Association between L1-relative velocity variation and deviation from a 100-step reference. All step budgets for each prompt use the identical initial latent.}
\label{tab:velocity_ref_error}
\end{table}

\subsection{Context Matters}
\label{sec:context_matters}

Table~\ref{tab:context_importance} evaluates the role of input context in adaptive step selection. We compare the contextual UCB policy with a non-contextual UCB variant that uses the same bandit framework without conditioning decisions on prompt-specific features. We additionally evaluate Thompson Sampling (TS) as an alternative exploration strategy.

The contextual variants achieve a better quality--efficiency trade-off than the non-contextual UCB baseline. In particular, contextual UCB improves the overall GenEval score from $0.74$ to $0.77$ while reducing the average NFE from $20.2$ to $17.7$. Thompson Sampling exhibits similar behavior, reaching an overall score of $0.76$ at $17.5$ NFE. These results indicate that conditioning the decision policy on input-dependent information is useful beyond learning a single dataset-level step-selection strategy.

\begin{table}[t]
\centering
\small
\setlength{\tabcolsep}{6pt}
\renewcommand{\arraystretch}{1.15}
\begin{tabular}{lcccc}
\toprule
\textbf{Method} & \textbf{Overall $\uparrow$} & \textbf{IQA $\uparrow$} & \textbf{NFE $\downarrow$} & \textbf{Time $\downarrow$} \\
\midrule
Ours + UCB (No Context) & 0.74 & 0.60 & 20.2 & 31.9 \\
\midrule
\textbf{Ours + UCB} & \textbf{0.77} & \textbf{0.65} & 17.7 & 27.5 \\
Ours + TS & 0.76 & \textbf{0.65} & \textbf{17.5} & \textbf{26.9} \\
\bottomrule
\end{tabular}
\caption{\textbf{Ablation of context and bandit strategy.} Conditioning the policy on input context improves the quality--efficiency trade-off relative to the non-contextual UCB variant, while UCB and Thompson Sampling provide comparable contextual policies.}
\label{tab:context_importance}
\end{table}

\paragraph{Effect of Context Representation.}
We further ablate the representation used to condition the step-selection policy by comparing no input context, handcrafted prompt features, MiniLM embeddings, and their combination. Table~\ref{tab:context_representation} reports results aggregated over three seeds. The full representation reduces the mean NFE from $39.75$ to $37.67$ and decreases the frequency of selecting the maximum step budget from $65.4\%$ to $47.9\%$ relative to the no-context variant. Handcrafted features achieve a similar reduction in NFE ($37.79$), whereas MiniLM alone remains closer to the no-context policy ($39.58$ NFE). These results indicate that contextual information influences computation allocation, with handcrafted features accounting for a substantial portion of the observed reduction.

\begin{table}[t]
\centering
\small
\setlength{\tabcolsep}{6pt}
\renewcommand{\arraystretch}{1.15}
\begin{tabular}{lcc}
\toprule
\textbf{Context} & \textbf{Mean NFE $\downarrow$} & \textbf{Max-step rate $\downarrow$} \\
\midrule
No context       & 39.75 & 65.4\% \\
Handcrafted only & 37.79 & 52.9\% \\
MiniLM only      & 39.58 & 60.0\% \\
Full             & \textbf{37.67} & \textbf{47.9\%} \\
\bottomrule
\end{tabular}
\caption{\textbf{Effect of context representation.} Results are aggregated over three seeds. Max-step rate denotes the fraction of decisions selecting the largest available step budget.}
\label{tab:context_representation}
\end{table}

\paragraph{Online Learning and Order Robustness.}
Since \algo{} updates its step-selection policy online, we examine its behavior as a function of the number of observed interactions and its sensitivity to prompt ordering. We evaluate 100 GenEval prompts under five random prompt permutations and three independent seeds per permutation. As shown in Table~\ref{tab:online_learning}, the policy improves during the initial interactions and subsequently stabilizes: the mean NFE decreases from $43.2$ over rounds $1$--$10$ to $37.4$ over rounds $81$--$100$, while the mean surrogate reward increases from $0.520$ to $0.580$. Across prompt permutations, the final mean NFE is $37.8\pm1.1$ and the corresponding mean surrogate reward is $0.579\pm0.009$, indicating limited sensitivity to prompt ordering.

\begin{table}[t]
\centering
\small
\setlength{\tabcolsep}{6pt}
\renewcommand{\arraystretch}{1.12}
\begin{tabular}{lccc}
\toprule
\textbf{Rounds} &
\textbf{Reward $\uparrow$} &
\textbf{NFE $\downarrow$} &
\textbf{Max-step rate} \\
\midrule
1--10   & 0.520 & 43.2 & 72.0\% \\
11--20  & 0.548 & 40.5 & 63.3\% \\
21--40  & 0.566 & 38.7 & 55.8\% \\
41--60  & 0.575 & 37.9 & 51.6\% \\
61--80  & 0.578 & 37.6 & 50.4\% \\
81--100 & 0.580 & 37.4 & 49.8\% \\
\bottomrule
\end{tabular}
\caption{\textbf{Online adaptation and order robustness.} Results are aggregated over five random prompt permutations and three independent seeds per permutation. Max-step rate denotes the fraction of decisions selecting the largest available step budget.}
\label{tab:online_learning}
\end{table}

\paragraph{Effect of UCB Exploration.}
To isolate the contribution of uncertainty-aware exploration, we compare NeuralUCB with a greedy variant using the identical context representation, neural reward model, online updates, replay buffer, initialization, and cold-start procedure, but without the UCB confidence bonus after initialization. The greedy policy therefore selects $\alpha_t=\arg\max_{\alpha}f_\theta(\phi(x_t),\alpha)$. As shown in Table~\ref{tab:ucb_greedy}, NeuralUCB achieves a higher mean surrogate reward ($0.582$ vs.\ $0.541$) while reducing the mean NFE ($37.5$ vs.\ $39.6$). It also selects the maximum step budget less frequently ($49.7\%$ vs.\ $63.4\%$) and exhibits higher action entropy ($1.54$ vs.\ $1.21$), indicating a more diverse allocation of computation. This controlled ablation shows that uncertainty-aware exploration provides a measurable benefit beyond greedy optimization of the learned reward predictor.

\begin{table}[t]
\centering
\small
\setlength{\tabcolsep}{6pt}
\renewcommand{\arraystretch}{1.12}
\begin{tabular}{lcccc}
\toprule
\textbf{Policy} &
\textbf{Reward $\uparrow$} &
\textbf{NFE $\downarrow$} &
\textbf{Max-step rate} &
\textbf{Action entropy $\uparrow$} \\
\midrule
Greedy &
$0.541\pm0.011$ &
$39.6\pm1.2$ &
$63.4\%$ &
$1.21$ \\
NeuralUCB &
$\mathbf{0.582\pm0.008}$ &
$\mathbf{37.5\pm0.8}$ &
$49.7\%$ &
$\mathbf{1.54}$ \\
\bottomrule
\end{tabular}
\caption{\textbf{Effect of UCB exploration.} NeuralUCB is compared with an otherwise identical online greedy policy with the uncertainty bonus removed after initialization. We report mean $\pm$ standard deviation over three independent seeds. Action entropy denotes the entropy of the empirical distribution of selected step budgets.}
\label{tab:ucb_greedy}
\end{table}

\begin{table}[t]
\centering
\small
\setlength{\tabcolsep}{4pt}
\renewcommand{\arraystretch}{1.2}

\begin{tabular}{l|cccccc|ccc}
\toprule
\textbf{Method} 
& \multicolumn{6}{c|}{\textbf{Compositional Dimensions}} 
& \multicolumn{3}{c}{\textbf{Overall Metrics}} \\

\cmidrule(lr){2-7} \cmidrule(lr){8-10}

& \textbf{SO} & \textbf{TO} & \textbf{CO} & \textbf{CL} & \textbf{CA} & \textbf{PO}
& \textbf{Overall} & \textbf{CLIPIQA} & \textbf{NFE} \\

\midrule
\multicolumn{10}{c}{\textit{Full Model}} \\
\midrule
Full-10 & 0.99 & 0.79 & 0.37 & 0.88 & 0.21 & 0.15 & 0.57 & 0.72 & 10 \\
Full-6  & 0.99 & 0.78 & 0.36 & 0.86 & 0.20 & 0.15 & 0.55 & 0.68 & 6 \\
Full-3  & 0.98 & 0.75 & 0.29 & 0.85 & 0.18 & 0.12 & 0.52 & 0.62 & 3 \\

\midrule
\multicolumn{10}{c}{\textit{Ours}} \\
\midrule
\textbf{Ours-UCB} 
& \textbf{0.99} & \textbf{0.79} & 0.35 & \textbf{0.87} & 0.20 & 0.13
& \textbf{0.55} & \textbf{0.70} & \textbf{5.1} \\

\bottomrule
\end{tabular}

\caption{Comparison under low-step regimes where the PeRFlow model, which is trained for low step generation within $5$, $8$, $10$ steps, where our method further improves this method by making the choice of number of steps adaptive. Our method achieves competitive performance while reducing the number of function evaluations (NFE).}
\label{tab:res_perflow}
\end{table}

\begin{figure}
    \centering
    \includegraphics[width=0.5\linewidth]{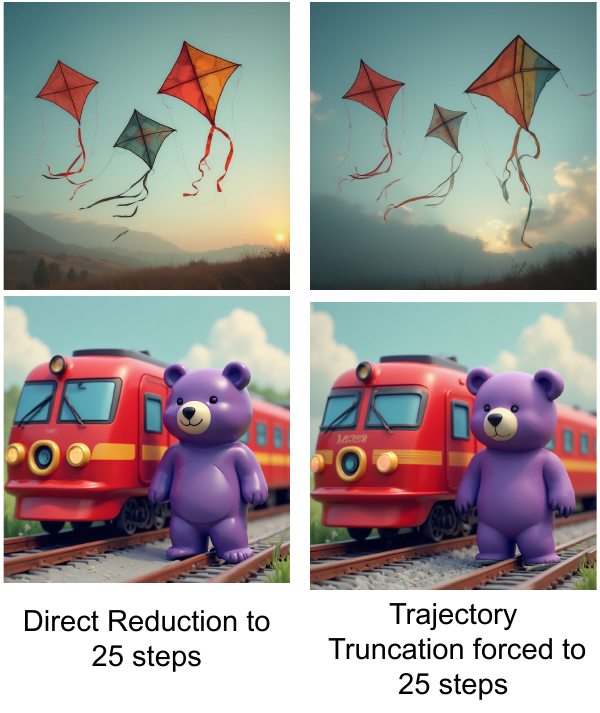}
    \caption{\textbf{Direct step reduction vs.\ trajectory truncation.} Direct reduction generates with a timestep schedule constructed for the reduced step budget, whereas trajectory truncation obtains a comparable computational reduction by skipping intermediate evaluations from a higher-step trajectory.}
    \label{fig:trajectory}
\end{figure}

\begin{table}[t]
\centering
\small
\setlength{\tabcolsep}{5pt}
\renewcommand{\arraystretch}{1.2}
\begin{tabular}{lcccc}
\toprule
\textbf{Method} & \textbf{GenEval $\uparrow$} & \textbf{CLIP-IQA $\uparrow$} & \textbf{NFE $\downarrow$} & \textbf{Time $\downarrow$} \\
\midrule

\multicolumn{5}{c}{\textit{Baseline Methods}} \\
\midrule
TeaCache  & 0.74 & 0.59 & 29.4 & 45.8 \\
FastFlow  & 0.75 & 0.59 & 20.9 & 32.9 \\
FlowCast  & \textbf{0.76} & \textbf{0.60} & 21.5 & 34.2 \\

\midrule
\multicolumn{5}{c}{\textit{With Ours (Adaptive Step Selection)}} \\
\midrule
TeaCache + Ours & 0.71 & 0.55 & 16.8 & 25.7 \\
FastFlow + Ours & 0.73 & 0.59 & \textbf{16.0} & \textbf{24.5} \\
FlowCast + Ours & 0.75 & 0.59 & -- & 26.3 \\

\bottomrule
\end{tabular}
\caption{\textbf{Combination with existing acceleration methods on BAGEL.} Adaptive step-budget selection can be applied together with TeaCache, FastFlow, and FlowCast, providing additional runtime reductions with method-dependent changes in generation quality.}
\label{tab:complementary_results}
\end{table}





\begin{table}[t]
\centering
\small
\setlength{\tabcolsep}{5pt}
\renewcommand{\arraystretch}{1.2}
\begin{tabular}{l|cc|cc}
\toprule
\multirow{2}{*}{\textbf{Hyperparameter}} 
& \multicolumn{2}{c|}{\textbf{FLUX}} 
& \multicolumn{2}{c}{\textbf{BAGEL}} \\
\cmidrule(lr){2-3} \cmidrule(lr){4-5}
& \textbf{Generation} & \textbf{Editing} 
& \textbf{Generation} & \textbf{Editing} \\
\midrule

Image resolution 
& $1360 \times 768$ & -- 
& $1024 \times 1024$ & -- \\

Guidance scale 
& $2.5$ & $2.5$ 
& $4.0$ & $4.0$ \\

Guidance rescale 
& $0$ & $0$ 
& $1.0$ & $2.0$ \\

$\mu$ 
& $\tfrac{1}{50}$ & $\tfrac{1}{60}$ 
& $\tfrac{1}{50}$ & $\tfrac{1}{60}$ \\

CFG interval 
& -- & -- 
& $[0.4,\,1.0]$ & $[0.0,\,1.0]$ \\

Timestep shift 
& -- & -- 
& $3$ & $3$ \\

CFG renorm min 
& -- & -- 
& $0$ & $0$ \\

CFG renorm type 
& -- & -- 
& global & text-channel \\

Arms 
& \multicolumn{2}{c|}{$\{10, 15, 20, \dots, 50\}$} 
& \multicolumn{2}{c}{$\text{Same as FLUX}$} \\

\bottomrule
\end{tabular}
\caption{Hyperparameter settings for FLUX and BAGEL across generation and editing tasks.}
\label{tab:hyperparams}
\end{table}

\subsection{Direct Step Reduction vs.\ Trajectory Manipulation}
\label{sec:direct_vs_trajectory}

Figure~\ref{fig:trajectory} compares direct step reduction with trajectory truncation. In the former, generation is performed using a timestep schedule constructed directly for the reduced integration budget. In the latter, a comparable speedup is obtained by skipping intermediate evaluations along a higher-step trajectory.

In the examples shown, direct step reduction retains better visual quality than trajectory truncation at a comparable computational budget. One possible explanation is that direct reduction preserves a solver-consistent discretization for the selected number of steps, whereas trajectory truncation modifies the original sampling trajectory by omitting intermediate evaluations.

This comparison motivates the design of \algo{}, which selects an input-dependent step budget and then executes the corresponding solver schedule rather than selecting a high-step trajectory and subsequently removing intermediate evaluations.

\begin{table}[t]
\centering
\small
\setlength{\tabcolsep}{5pt}
\renewcommand{\arraystretch}{1.2}
\begin{tabular}{l l c c c}
\toprule
\textbf{Method} & \textbf{Characteristics} & \textbf{GenEval} & \textbf{IQA} & \textbf{NFE} \\
\midrule

\multicolumn{5}{c}{\textit{Base Solvers (50 Steps)}} \\
\midrule
Euler        & Deterministic, 1st-order     & 0.60 & 0.71 & 50 \\
Euler-anc    & Stochastic, 1st-order        & 0.57 & 0.72 & 50 \\
DPM++2M      & Deterministic, multistep     & 0.59 & 0.70 & 50 \\
DPM++2S-anc  & Stochastic, multistep        & 0.57 & 0.71 & 50 \\
Heun         & Deterministic, 2nd-order     & 0.59 & 0.70 & 50 \\
LMS          & Deterministic, multistep     & 0.58 & 0.56 & 50 \\

\midrule
\multicolumn{5}{c}{\textit{CoFlow (Adaptive Steps)}} \\
\midrule
CoFlow-Euler        & Deterministic, 1st-order     & 0.59 & 0.70 & 19.4 \\
CoFlow-Euler-anc    & Stochastic, 1st-order        & 0.56 & 0.68 & 18.9 \\
CoFlow-DPM++2M      & Deterministic, multistep     & 0.57 & 0.68 & 18.5 \\
CoFlow-DPM++2S-anc  & Stochastic, multistep        & 0.55 & 0.70 & 18.3 \\
CoFlow-Heun         & Deterministic, 2nd-order     & 0.59 & 0.68 & 19.8 \\
CoFlow-LMS          & Deterministic, multistep     & 0.57 & 0.54 & 19.1 \\

\bottomrule
\end{tabular}
\caption{Comparison across solver families on a Stable Diffusion backbone. We categorize solvers by stochasticity and numerical integration type. CoFlow consistently reduces the number of function evaluations (NFE) by $\sim$2.5$\times$ while maintaining comparable generation quality.}
\label{tab:solver_comparison}
\end{table}

\begin{table}[t]
\centering
\small
\setlength{\tabcolsep}{4pt}
\renewcommand{\arraystretch}{1.2}
\begin{tabular}{l p{3.8cm} p{5.2cm}}
\toprule
\textbf{Feature} & \textbf{Definition} & \textbf{Description} \\
\midrule

Word count 
& $\min\left(\frac{\#\text{words}}{50},\,1\right)$ 
& Captures prompt length and compositional richness. Longer prompts typically involve more entities, relations, or attributes, increasing reasoning complexity. \\

Subword diversity (TTR) 
& $\frac{|\text{unique tokens}|}{\#\text{tokens}}$ 
& Measures lexical diversity. Higher diversity indicates richer semantics and reduced redundancy, often correlating with more complex generation requirements. \\

Mean word length 
& $\min\left(\frac{1}{10}\cdot\frac{\sum |w|}{\#\text{words}},\,1\right)$ 
& Acts as a proxy for linguistic specificity. Longer words often correspond to more descriptive or specialized concepts. \\

Numeric fraction 
& $\frac{\#\text{words containing digits}}{\#\text{words}}$ 
& Captures numerical constraints (e.g., counts or quantities), which increase compositional difficulty and require precise control in generation. \\

Object density (POS-based)
& $\min\left(\frac{\#\text{nouns}}{\#\text{words}},\,1\right)$ 
& Measures the number of entities in the prompt. Higher density implies more objects and interactions, increasing spatial and compositional complexity. \\

Attribute density (POS-based) 
& $\min\left(\frac{\#\text{adjectives}}{\#\text{words}},\,1\right)$ 
& Captures descriptive richness. More attributes introduce finer-grained constraints (e.g., color, size), making generation more demanding. \\

Punctuation density 
& $\min\left(\frac{\#\text{punctuation}}{2 \cdot \#\text{words}},\,1\right)$ 
& Reflects structural complexity such as clauses or multi-part instructions. Higher density often corresponds to more intricate prompt structure. \\

Syllable complexity 
& $\min\left(\frac{\text{avg syllables per word}}{4},\,1\right)$ 
& Approximates linguistic complexity and descriptiveness. Higher syllable counts correlate with more detailed or specialized language. \\

\bottomrule
\end{tabular}
\caption{Handcrafted prompt features used alongside MiniLM embeddings to construct the context vector $\phi(x)$. The features provide lightweight compositional, lexical, numerical, and structural cues for contextual step selection.} 
\label{tab:prompt_features}
\end{table}

\subsection{Performance Across Solver Families}\label{sec:solver_families}
In Table~\ref{tab:solver_comparison}, we evaluate \algo{} across a diverse set of numerical solvers on the Stable Diffusion XL \cite{podell2023sdxl} backbone, including deterministic and stochastic variants, as well as first-order, second-order, and multistep integration schemes. We first report baseline performance under a fixed discretization budget (50 steps) to establish a consistent comparison across solvers. We then apply \algo{} as an adaptive inference controller that selects a reduced, input-dependent number of steps while preserving the original solver dynamics.

Across all solver families, \algo{} achieves a consistent reduction in the number of function evaluations (NFEs) by approximately $2.5\times$, while maintaining competitive performance in both semantic alignment (GenEval) and perceptual quality (IQA). Notably, the relative ranking of solvers remains largely unchanged after applying \algo{}, indicating that our method operates orthogonally to the choice of numerical solver. 

We also observe mild solver-dependent sensitivity to step reduction: multistep methods (e.g., DPM++ variants) exhibit slightly larger degradation in GenEval, likely due to their reliance on historical trajectory information, whereas higher-order solvers such as Heun demonstrate greater robustness under reduced discretization. Overall, these results highlight that \algo{} provides a general, plug-and-play acceleration mechanism that is compatible with a wide range of solvers, enabling substantial speedups without sacrificing generation fidelity.

\section{Context Features}\label{sec:context_features}
We construct a context vector $\phi(x)\in\mathbb{R}^d$ from complementary semantic, compositional, and structural properties of the input prompt. The representation combines a MiniLM sentence embedding with lightweight handcrafted features, providing input-dependent information to the contextual bandit while introducing minimal computational overhead. The individual handcrafted features are summarized in Table~\ref{tab:prompt_features}.


\textbf{Feature Design:}
The handcrafted features capture complementary surface-level properties of the prompt that may provide useful cues for adaptive step selection:

\begin{itemize}
    \item \textbf{Compositional scale:} Word count captures prompt length and provides a lightweight cue for the amount of compositional information present.

    \item \textbf{Semantic richness:} Subword type-token ratio (TTR), mean word length, and syllable complexity capture lexical diversity and linguistic specificity.

    \item \textbf{Entities and attributes:} Object and attribute densities are obtained using lightweight part-of-speech tagging and provide cues about the presence of nouns and descriptive constraints.

    \item \textbf{Numerical constraints:} The fraction of words containing digits captures explicit numerical or counting information.

    \item \textbf{Structural information:} Punctuation density provides a lightweight description of prompt structure, including clauses, lists, and multi-part instructions.
\end{itemize}

\textbf{Normalization:}
All features are normalized to the range $[0,1]$ to ensure consistent scaling across different dimensions. This prevents any single feature from dominating the representation and stabilizes learning in the bandit model.

\textbf{Efficiency:}
The proposed feature set is intentionally lightweight. All statistics are computed in linear time with respect to the prompt length, and POS-based features are extracted using a small, off-the-shelf tagger. In practice, this incurs only a few milliseconds of overhead per prompt, which is negligible compared to the cost of generative inference.

\subsection{Choice of Context Representation as MiniLM}\label{sec:choice_of_minilm}

A key component of our approach is the construction of a lightweight yet expressive context representation for each prompt. We employ a pretrained SentenceTransformer (all-MiniLM-L6-v2) to obtain semantic embeddings, which are combined with handcrafted complexity features.

\textbf{Why SentenceTransformer (MiniLM)?}
We use all-MiniLM-L6-v2 as a compact semantic representation of the prompt. Its sentence-level embeddings complement the handcrafted lexical and structural features by encoding higher-level semantic information while remaining inexpensive to compute. Its compact representation is particularly suitable for the lightweight contextual decision module used by \algo{}.

\textbf{Why not use the model’s native text encoder?}
An alternative is to use the prompt encoder from the underlying generative model. However, we avoid this as it produces high-dimensional representations, leading to significantly larger input sizes for the bandit. This increases both computational cost and training instability, making it less suitable for a lightweight, real-time decision module.
In contrast, our use of an external, pretrained sentence encoder provides a \emph{model-agnostic} and reusable representation that generalizes across different generative backbones (e.g., BAGEL, FLUX) without modification. Overall, this design yields a compact, semantically meaningful, and model-agnostic context representation that is suited for adaptive decision-making under strict efficiency constraints.\\


    \FloatBarrier
    \clearpage
    \section*{NeurIPS Paper Checklist}

\begin{enumerate}

\item {\bf Claims}
    \item[] Question: Do the main claims made in the abstract and introduction accurately reflect the paper's contributions and scope?
    \item[] Answer: \answerYes{} 
    \item[] Justification: Yes, the main claims in the abstract and introduction accurately reflect the paper's contribution and scope.
    \item[] Guidelines:
    \begin{itemize}
        \item The answer \answerNA{} means that the abstract and introduction do not include the claims made in the paper.
        \item The abstract and/or introduction should clearly state the claims made, including the contributions made in the paper and important assumptions and limitations. A \answerNo{} or \answerNA{} answer to this question will not be perceived well by the reviewers. 
        \item The claims made should match theoretical and experimental results, and reflect how much the results can be expected to generalize to other settings. 
        \item It is fine to include aspirational goals as motivation as long as it is clear that these goals are not attained by the paper. 
    \end{itemize}

\item {\bf Limitations}
    \item[] Question: Does the paper discuss the limitations of the work performed by the authors?
    \item[] Answer: \answerYes{} 
    \item[] Justification: Added the limitation of our work in \ref{sec:limitations}
    \item[] Guidelines:
    \begin{itemize}
        \item The answer \answerNA{} means that the paper has no limitation while the answer \answerNo{} means that the paper has limitations, but those are not discussed in the paper. 
        \item The authors are encouraged to create a separate ``Limitations'' section in their paper.
        \item The paper should point out any strong assumptions and how robust the results are to violations of these assumptions (e.g., independence assumptions, noiseless settings, model well-specification, asymptotic approximations only holding locally). The authors should reflect on how these assumptions might be violated in practice and what the implications would be.
        \item The authors should reflect on the scope of the claims made, e.g., if the approach was only tested on a few datasets or with a few runs. In general, empirical results often depend on implicit assumptions, which should be articulated.
        \item The authors should reflect on the factors that influence the performance of the approach. For example, a facial recognition algorithm may perform poorly when image resolution is low or images are taken in low lighting. Or a speech-to-text system might not be used reliably to provide closed captions for online lectures because it fails to handle technical jargon.
        \item The authors should discuss the computational efficiency of the proposed algorithms and how they scale with dataset size.
        \item If applicable, the authors should discuss possible limitations of their approach to address problems of privacy and fairness.
        \item While the authors might fear that complete honesty about limitations might be used by reviewers as grounds for rejection, a worse outcome might be that reviewers discover limitations that aren't acknowledged in the paper. The authors should use their best judgment and recognize that individual actions in favor of transparency play an important role in developing norms that preserve the integrity of the community. Reviewers will be specifically instructed to not penalize honesty concerning limitations.
    \end{itemize}

\item {\bf Theory assumptions and proofs}
    \item[] Question: For each theoretical result, does the paper provide the full set of assumptions and a complete (and correct) proof?
    \item[] Answer: \answerYes{} 
    \item[] Justification: In the Appendix \ref{sec:proof}, we add the proof of Theorem \ref{thm:main}.
    \item[] Guidelines:
    \begin{itemize}
        \item The answer \answerNA{} means that the paper does not include theoretical results. 
        \item All the theorems, formulas, and proofs in the paper should be numbered and cross-referenced.
        \item All assumptions should be clearly stated or referenced in the statement of any theorems.
        \item The proofs can either appear in the main paper or the supplemental material, but if they appear in the supplemental material, the authors are encouraged to provide a short proof sketch to provide intuition. 
        \item Inversely, any informal proof provided in the core of the paper should be complemented by formal proofs provided in appendix or supplemental material.
        \item Theorems and Lemmas that the proof relies upon should be properly referenced. 
    \end{itemize}

    \item {\bf Experimental result reproducibility}
    \item[] Question: Does the paper fully disclose all the information needed to reproduce the main experimental results of the paper to the extent that it affects the main claims and/or conclusions of the paper (regardless of whether the code and data are provided or not)?
    \item[] Answer: \answerYes{} 
    \item[] Justification: We provided the source code in the abstract, all the hyperparameter details in \ref{tab:hyperparams} to reproduce the results.
    \item[] Guidelines:
    \begin{itemize}
        \item The answer \answerNA{} means that the paper does not include experiments.
        \item If the paper includes experiments, a \answerNo{} answer to this question will not be perceived well by the reviewers: Making the paper reproducible is important, regardless of whether the code and data are provided or not.
        \item If the contribution is a dataset and\slash or model, the authors should describe the steps taken to make their results reproducible or verifiable. 
        \item Depending on the contribution, reproducibility can be accomplished in various ways. For example, if the contribution is a novel architecture, describing the architecture fully might suffice, or if the contribution is a specific model and empirical evaluation, it may be necessary to either make it possible for others to replicate the model with the same dataset, or provide access to the model. In general. releasing code and data is often one good way to accomplish this, but reproducibility can also be provided via detailed instructions for how to replicate the results, access to a hosted model (e.g., in the case of a large language model), releasing of a model checkpoint, or other means that are appropriate to the research performed.
        \item While NeurIPS does not require releasing code, the conference does require all submissions to provide some reasonable avenue for reproducibility, which may depend on the nature of the contribution. For example
        \begin{enumerate}
            \item If the contribution is primarily a new algorithm, the paper should make it clear how to reproduce that algorithm.
            \item If the contribution is primarily a new model architecture, the paper should describe the architecture clearly and fully.
            \item If the contribution is a new model (e.g., a large language model), then there should either be a way to access this model for reproducing the results or a way to reproduce the model (e.g., with an open-source dataset or instructions for how to construct the dataset).
            \item We recognize that reproducibility may be tricky in some cases, in which case authors are welcome to describe the particular way they provide for reproducibility. In the case of closed-source models, it may be that access to the model is limited in some way (e.g., to registered users), but it should be possible for other researchers to have some path to reproducing or verifying the results.
        \end{enumerate}
    \end{itemize}

\item {\bf Open access to data and code}
    \item[] Question: Does the paper provide open access to the data and code, with sufficient instructions to faithfully reproduce the main experimental results, as described in supplemental material?
    \item[] Answer: \answerYes{} 
    \item[] Justification: We provide an anonymized repository for the source code to reproduce the experimental results. The link of the repository is provided in the abstract of the paper.
    \item[] Guidelines:
    \begin{itemize}
        \item The answer \answerNA{} means that paper does not include experiments requiring code.
        \item Please see the NeurIPS code and data submission guidelines (\url{https://neurips.cc/public/guides/CodeSubmissionPolicy}) for more details.
        \item While we encourage the release of code and data, we understand that this might not be possible, so \answerNo{} is an acceptable answer. Papers cannot be rejected simply for not including code, unless this is central to the contribution (e.g., for a new open-source benchmark).
        \item The instructions should contain the exact command and environment needed to run to reproduce the results. See the NeurIPS code and data submission guidelines (\url{https://neurips.cc/public/guides/CodeSubmissionPolicy}) for more details.
        \item The authors should provide instructions on data access and preparation, including how to access the raw data, preprocessed data, intermediate data, and generated data, etc.
        \item The authors should provide scripts to reproduce all experimental results for the new proposed method and baselines. If only a subset of experiments are reproducible, they should state which ones are omitted from the script and why.
        \item At submission time, to preserve anonymity, the authors should release anonymized versions (if applicable).
        \item Providing as much information as possible in supplemental material (appended to the paper) is recommended, but including URLs to data and code is permitted.
    \end{itemize}

\item {\bf Experimental setting/details}
    \item[] Question: Does the paper specify all the training and test details (e.g., data splits, hyperparameters, how they were chosen, type of optimizer) necessary to understand the results?
    \item[] Answer: \answerYes{} 
    \item[] Justification: Our method is fully inference time, so no train splits are present, and we provide all hyperparameter settings in Table \ref{tab:hyperparams} and in the experimental section \ref{sec:experiments}
    \item[] Guidelines:
    \begin{itemize}
        \item The answer \answerNA{} means that the paper does not include experiments.
        \item The experimental setting should be presented in the core of the paper to a level of detail that is necessary to appreciate the results and make sense of them.
        \item The full details can be provided either with the code, in appendix, or as supplemental material.
    \end{itemize}

\item {\bf Experiment statistical significance}
    \item[] Question: Does the paper report error bars suitably and correctly defined or other appropriate information about the statistical significance of the experiments?
    \item[] Answer: \answerNo{} 
    \item[] Justification: Consistent with prior work in visual generation, evaluation is primarily based on standard benchmark metrics and qualitative comparisons.
    \item[] Guidelines:
    \begin{itemize}
        \item The answer \answerNA{} means that the paper does not include experiments.
        \item The authors should answer \answerYes{} if the results are accompanied by error bars, confidence intervals, or statistical significance tests, at least for the experiments that support the main claims of the paper.
        \item The factors of variability that the error bars are capturing should be clearly stated (for example, train/test split, initialization, random drawing of some parameter, or overall run with given experimental conditions).
        \item The method for calculating the error bars should be explained (closed form formula, call to a library function, bootstrap, etc.)
        \item The assumptions made should be given (e.g., Normally distributed errors).
        \item It should be clear whether the error bar is the standard deviation or the standard error of the mean.
        \item It is OK to report 1-sigma error bars, but one should state it. The authors should preferably report a 2-sigma error bar than state that they have a 96\% CI, if the hypothesis of Normality of errors is not verified.
        \item For asymmetric distributions, the authors should be careful not to show in tables or figures symmetric error bars that would yield results that are out of range (e.g., negative error rates).
        \item If error bars are reported in tables or plots, the authors should explain in the text how they were calculated and reference the corresponding figures or tables in the text.
    \end{itemize}

\item {\bf Experiments compute resources}
    \item[] Question: For each experiment, does the paper provide sufficient information on the computer resources (type of compute workers, memory, time of execution) needed to reproduce the experiments?
    \item[] Answer: \answerYes{} 
    \item[] Justification: In the experimental section \ref{sec:experiments}, we provide the compute resources used and the time for running experiments is discussed in all the result tables, as our method works on inference time reduction.
    \item[] Guidelines:
    \begin{itemize}
        \item The answer \answerNA{} means that the paper does not include experiments.
        \item The paper should indicate the type of compute workers CPU or GPU, internal cluster, or cloud provider, including relevant memory and storage.
        \item The paper should provide the amount of compute required for each of the individual experimental runs as well as estimate the total compute. 
        \item The paper should disclose whether the full research project required more compute than the experiments reported in the paper (e.g., preliminary or failed experiments that didn't make it into the paper). 
    \end{itemize}
    
\item {\bf Code of ethics}
    \item[] Question: Does the research conducted in the paper conform, in every respect, with the NeurIPS Code of Ethics \url{https://neurips.cc/public/EthicsGuidelines}?
    \item[] Answer: \answerYes{} 
    \item[] Justification: Our paper conforms, in every aspect, with NeurIPS Code of Ethics. 
    \item[] Guidelines:
    \begin{itemize}
        \item The answer \answerNA{} means that the authors have not reviewed the NeurIPS Code of Ethics.
        \item If the authors answer \answerNo, they should explain the special circumstances that require a deviation from the Code of Ethics.
        \item The authors should make sure to preserve anonymity (e.g., if there is a special consideration due to laws or regulations in their jurisdiction).
    \end{itemize}

\item {\bf Broader impacts}
    \item[] Question: Does the paper discuss both potential positive societal impacts and negative societal impacts of the work performed?
    \item[] Answer: \answerNA{} 
    \item[] Justification: Our work is focused on inference time speedup of existing methods, hence it does not have any societal impact. 
    \item[] Guidelines:
    \begin{itemize}
        \item The answer \answerNA{} means that there is no societal impact of the work performed.
        \item If the authors answer \answerNA{} or \answerNo, they should explain why their work has no societal impact or why the paper does not address societal impact.
        \item Examples of negative societal impacts include potential malicious or unintended uses (e.g., disinformation, generating fake profiles, surveillance), fairness considerations (e.g., deployment of technologies that could make decisions that unfairly impact specific groups), privacy considerations, and security considerations.
        \item The conference expects that many papers will be foundational research and not tied to particular applications, let alone deployments. However, if there is a direct path to any negative applications, the authors should point it out. For example, it is legitimate to point out that an improvement in the quality of generative models could be used to generate Deepfakes for disinformation. On the other hand, it is not needed to point out that a generic algorithm for optimizing neural networks could enable people to train models that generate Deepfakes faster.
        \item The authors should consider possible harms that could arise when the technology is being used as intended and functioning correctly, harms that could arise when the technology is being used as intended but gives incorrect results, and harms following from (intentional or unintentional) misuse of the technology.
        \item If there are negative societal impacts, the authors could also discuss possible mitigation strategies (e.g., gated release of models, providing defenses in addition to attacks, mechanisms for monitoring misuse, mechanisms to monitor how a system learns from feedback over time, improving the efficiency and accessibility of ML).
    \end{itemize}
    
\item {\bf Safeguards}
    \item[] Question: Does the paper describe safeguards that have been put in place for responsible release of data or models that have a high risk for misuse (e.g., pre-trained language models, image generators, or scraped datasets)?
    \item[] Answer: \answerNA{} 
    \item[] Justification: As our method is an inference-time speedup method, we do not require any additional safeguards to be put.
    \item[] Guidelines:
    \begin{itemize}
        \item The answer \answerNA{} means that the paper poses no such risks.
        \item Released models that have a high risk for misuse or dual-use should be released with necessary safeguards to allow for controlled use of the model, for example by requiring that users adhere to usage guidelines or restrictions to access the model or implementing safety filters. 
        \item Datasets that have been scraped from the Internet could pose safety risks. The authors should describe how they avoided releasing unsafe images.
        \item We recognize that providing effective safeguards is challenging, and many papers do not require this, but we encourage authors to take this into account and make a best faith effort.
    \end{itemize}

\item {\bf Licenses for existing assets}
    \item[] Question: Are the creators or original owners of assets (e.g., code, data, models), used in the paper, properly credited and are the license and terms of use explicitly mentioned and properly respected?
    \item[] Answer: \answerYes{} 
    \item[] Justification: We use some of the existing models in the visual generation domain, and we have properly cited them and adhered to their terms of use.
    \item[] Guidelines:
    \begin{itemize}
        \item The answer \answerNA{} means that the paper does not use existing assets.
        \item The authors should cite the original paper that produced the code package or dataset.
        \item The authors should state which version of the asset is used and, if possible, include a URL.
        \item The name of the license (e.g., CC-BY 4.0) should be included for each asset.
        \item For scraped data from a particular source (e.g., website), the copyright and terms of service of that source should be provided.
        \item If assets are released, the license, copyright information, and terms of use in the package should be provided. For popular datasets, \url{paperswithcode.com/datasets} has curated licenses for some datasets. Their licensing guide can help determine the license of a dataset.
        \item For existing datasets that are re-packaged, both the original license and the license of the derived asset (if it has changed) should be provided.
        \item If this information is not available online, the authors are encouraged to reach out to the asset's creators.
    \end{itemize}

\item {\bf New assets}
    \item[] Question: Are new assets introduced in the paper well documented and is the documentation provided alongside the assets?
    \item[] Answer: \answerNA{} 
    \item[] Justification: We do not provide any new assets.
    \item[] Guidelines:
    \begin{itemize}
        \item The answer \answerNA{} means that the paper does not release new assets.
        \item Researchers should communicate the details of the dataset\slash code\slash model as part of their submissions via structured templates. This includes details about training, license, limitations, etc. 
        \item The paper should discuss whether and how consent was obtained from people whose asset is used.
        \item At submission time, remember to anonymize your assets (if applicable). You can either create an anonymized URL or include an anonymized zip file.
    \end{itemize}

\item {\bf Crowdsourcing and research with human subjects}
    \item[] Question: For crowdsourcing experiments and research with human subjects, does the paper include the full text of instructions given to participants and screenshots, if applicable, as well as details about compensation (if any)? 
    \item[] Answer: \answerNA{} 
    \item[] Justification: There are no crowdsourcing experiments in our work.
    \item[] Guidelines:
    \begin{itemize}
        \item The answer \answerNA{} means that the paper does not involve crowdsourcing nor research with human subjects.
        \item Including this information in the supplemental material is fine, but if the main contribution of the paper involves human subjects, then as much detail as possible should be included in the main paper. 
        \item According to the NeurIPS Code of Ethics, workers involved in data collection, curation, or other labor should be paid at least the minimum wage in the country of the data collector. 
    \end{itemize}

\item {\bf Institutional review board (IRB) approvals or equivalent for research with human subjects}
    \item[] Question: Does the paper describe potential risks incurred by study participants, whether such risks were disclosed to the subjects, and whether Institutional Review Board (IRB) approvals (or an equivalent approval/review based on the requirements of your country or institution) were obtained?
    \item[] Answer: \answerNA{} 
    \item[] Justification: There are no potential risks of our work. Our work is focused on inference time speedup.
    \item[] Guidelines:
    \begin{itemize}
        \item The answer \answerNA{} means that the paper does not involve crowdsourcing nor research with human subjects.
        \item Depending on the country in which research is conducted, IRB approval (or equivalent) may be required for any human subjects research. If you obtained IRB approval, you should clearly state this in the paper. 
        \item We recognize that the procedures for this may vary significantly between institutions and locations, and we expect authors to adhere to the NeurIPS Code of Ethics and the guidelines for their institution. 
        \item For initial submissions, do not include any information that would break anonymity (if applicable), such as the institution conducting the review.
    \end{itemize}

\item {\bf Declaration of LLM usage}
    \item[] Question: Does the paper describe the usage of LLMs if it is an important, original, or non-standard component of the core methods in this research? Note that if the LLM is used only for writing, editing, or formatting purposes and does \emph{not} impact the core methodology, scientific rigor, or originality of the research, declaration is not required.
    \item[] Answer: \answerNA{} 
    \item[] Justification: We use the LLM only for editing and formatting.
    \item[] Guidelines:
    \begin{itemize}
        \item The answer \answerNA{} means that the core method development in this research does not involve LLMs as any important, original, or non-standard components.
        \item Please refer to our LLM policy in the NeurIPS handbook for what should or should not be described.
    \end{itemize}

\end{enumerate}

\end{document}